\newtheorem{theorem}{Theorem}[section]
\newtheorem{remark}{Remark}[section]
\newtheorem{definition}{Definition}[section]
\newtheorem{example}{Example}[section]
\newcommand{\iq}{{\bf i}}
\newcommand{\jq}{{\bf j}}
\newcommand{\kq}{{\bf k}}
\newcommand{\Aq}{{\bf A}}
\newcommand{\aq}{{\bf a}}
\newcommand{\Cq}{{\bf C}}
\newcommand{\Vq}{{\bf V}}
\newcommand{\vq}{{\bf v}}
\newcommand{\Uq}{{\bf U}}
\newcommand{\uq}{{\bf u}}
\newcommand{\Lq}{{\bf L}}
\newcommand{\Sq}{{\bf S}}
\newcommand{\Xq}{{\bf X}}
\newcommand{\xq}{{\bf x}}
\newcommand{\Zq}{{\bf Z}}
\newcommand{\Yq}{{\bf Y}}
\newcommand{\wq}{{\bf w}}
\newcommand{\Fq}{{\bf F}}
\newcommand{\Pq}{{\bf P}}
\newcommand{\Qqq}{{\bf Q}}
\newcommand{\Qq}{\mathbb{Q}}
\newcommand{\XXq}{\boldsymbol{\mathcal{X}}}
\newcommand{\LLq}{\boldsymbol{\mathcal{L}}}
\newcommand{\SSq}{\boldsymbol{\mathcal{S}}}
\newcommand{\TTq}{\boldsymbol{\mathcal{T}}}
\newcommand{\YYq}{\boldsymbol{\mathcal{Y}}}
\newcommand{\ZZq}{\boldsymbol{\mathcal{Z}}}
\newcommand{\PPq}{\boldsymbol{\mathcal{P}}}
\newcommand{\QQq}{\boldsymbol{\mathcal{Q}}}
\newcommand{\tr}{\operatorname{Tr}}
\newcommand{\amin}{\operatorname{arg~min}}
\begin{document}
	%
	\title{A New Low-Rank Learning Robust Quaternion Tensor Completion Method for Color Video Inpainting Problem and Fast Algorithms}
	%
	%
	%

		\author{Zhi-Gang~Jia \thanks{Z. Jia is with the Research Institute of Mathematical Science and the School of Mathematics and Statistics, Jiangsu Normal University,
				Xuzhou 221116, P. R. China (e-mail: zhgjia@jsnu.edu.cn)}
		 and
			Jing-Fei~Zhu \thanks{J. Zhu is with the School of Mathematics and Statistics, Jiangsu Normal University, Xuzhou 221116, P. R. China.}
	%
		}

	\maketitle
	
		\begin{abstract}
		The color video inpainting problem is one of the most challenging problem in the modern imaging science. It aims to recover a color video from a small part of pixels that may contain noise. However, there are less of robust models that can simultaneously preserve the coupling  of color channels and the evolution of color video frames. In this paper, we present a new robust quaternion tensor completion (RQTC) model to solve this challenging problem and derive the exact recovery theory. The main idea is to build a quaternion tensor optimization model to recover a low-rank quaternion tensor that represents the targeted  color video and a sparse quaternion tensor that represents noise. This new model is very efficient to recover high dimensional data  that satisfies the prior low-rank assumption. To solve the case without low-rank property, we introduce a new low-rank learning RQTC model, which rearranges  similar  patches classified by a quaternion learning method into smaller tensors satisfying the prior low-rank assumption.  We also propose fast algorithms  with global convergence guarantees. In numerical experiments, the proposed methods successfully recover   color videos with eliminating color contamination and keeping the continuity of video scenery, and their solutions are of higher quality in terms of PSNR and SSIM values than the state-of-the-art algorithms.
	\end{abstract}
	
	\begin{IEEEkeywords}
		 Color video inpainting; robust quaternion tensor completion; 2DQPCA; low-rank; learning model
		
	\end{IEEEkeywords}

	%
	\IEEEpeerreviewmaketitle

	\section{Introduction}
	%
	%
	%
	%

		\IEEEPARstart{M}{any} applications of multi-dimensional data (tensor data) are becoming popular. For instance, color videos or images can be seen as 3-mode or 2-mode quaternion data. With its capacity to capture the fundamental substructures and color information, quaternion tensor-based modeling is an obvious choice to solve color video processing problems. A modern and challenging problem is color video inpainting, which aims to recover a color video from a sampling of its pixels that may contain noise. In mathematical language, this problem  is robust quaternion tensor completion (RQTC) problem. There are currently less of methods to solve this problem because it is difficult to   preserve the coupling  of color channels and the evolution of color video frames. In this paper, we present new robust quaternion tensor completion (RQTC) models to solve this challenging problem.
		
		

	For a single color image, the robust quaternion matrix completion (RQMC) method proposed in \cite{jia2018quaternion}  theoretically solved the color image inpainting under the  incoherence conditions. Chen and Ng \cite{errorchen2022} proposed a cross-channel weight strategy and analysed the error
	bound of RQMC problem.
    Xu \textit{et al.} \cite{xkscz2022} proposed a new model to combine deep prior and low-rank quaternion prior in color image processing.
    A generous amount of practical applications indicate that RQMC can completely recover the color images of which low-frequency 
	information dominates, but it fails to recover the color image of which high-frequency information dominates.
	To inpaint color images in the latter case, a new nonlocal self-similarity (NSS) based RQMC was introduced in \cite{jia2022quaternion} to compute an optimal  approximation to the  color image. The main idea is to gather similar patches into several color images of small size that mainly contain low-frequency information. This NSS-based RQMC  uses the distance function to find low-rank structures of color images. It is also applied to solve color video inpainting problems and achieves color videos of high quality. However, it overlooks the global information that reflects the potential relation of continuous frames. 
	So we need to build a quaternion tensor-based model for color video inpainting.
	
	Recall that several famous real tensor decompositions  \cite{kolda2009} serve as the foundation  of modern robust tensor completion (RTC) approaches. For instance, Liu \textit{et al.} \cite{liu2009} presented a sum-of-nuclear-norms (SNN) as tensor rank in the RTC model. This representation  depends on the Tucker decomposition \cite{Tucker1966} and SNN model is proved with an exact recovery guarantee in \cite{huang2014}.   Gao and Zhang \cite{gh2023} proposed a novel  nonconvex model with $\ell_p$ norm to solve RTC problem. Jiang \textit{et al.} \cite{jzzn2023}  presented  a data-adaptive dictionary to determine relevant third-order tensor tubes  and established a new tensor learning and coding model. Ng \textit{et al.} \cite{nzz2020}  proposed a novel unitary transform method that is very efficient by using similar patches strategy to form a third-order sub-tensor.  Wang \textit{et al.} \cite{wzj2022} recovered tensors by two new tensor norms. Zhao \textit{et al.} \cite{xmdl2022} proposed an equivalent  nonconvex  surrogates of RTC problem and analysed the recovery error bound. These RTC methods 
	have been successfully applied in color image or video processing. 
	However, RTC models  regard color images as 3-mode real tensors \cite{lqz2023, lzjynl2022} and color videos as 4-mode real tensors \cite{hllz2023} and thus, they usually independently process three color channels and ignore the mutual connection among channels. 
	
	Since quaternion has $\iq,\jq,\kq$ three imaginary parts, a color pixel can be seen as a pure quaternion.  Based on quaternion representation and calculation, the color information can be preserved in the color image processing. A low-rank quaternion tensor completion method (LRQTC) was proposed in \cite{miao2020qtc}. It cannot deal with the noisy or corrupted problem since it only contains  a low-rank regularization term.
%
	 By introducing a new sparsity regularization term into the subject function, we propose a new RQTC method for color video inpainting with missing and corrupted pixels. There are  two new models. One is a robust quaternion tensor completion (RQTC) model, which recovers color videos from a  global view. It is essentially in the form of a quaternion minimization problem with rank and $\ell_1$ norm two regularization terms.
	The other is a low-rank learning RQTC (LRL-RQTC) model. We intend to learn similar information to form low-rank structure and prove the numerical low-rank property in theory. Under the view of numerical linear algebra, the principal components computed by two-dimensional principal  component analysis (2DPCA) \cite{jda20042dpca} span an optimal subspace on which projected samples are maximally scattered. Meanwhile, low-rank approximations of original samples can be simultaneously reconstructed from such low-dimension projections. Recently, 2DPCA is generalized to quaternion, named by two-dimensional quaternion principal  component analysis (2DQPCA), in \cite{jia20172dqpca}  and 2DQPCA performs well on color image clustering. 2DQPCA and the variations extract features from training data and utilize these features to project training and testing samples into projections of low dimensions for efficient use of available computational resources.  We find that 2DQPCA is a good learning method to extract low-rank structure from quaternion tensors.	So  we apply 2DQPCA method to learn the low-rank structure adaptively. 
	
	The highlights are as follows:
	\begin{itemize}
		\item  We present a novel RQTC method for color video inpainting problem with missing and corrupted  pixels and derive  the exact recovery theorem. 
		This method can simultaneously preserve the coupling  of color channels and the evolution of color video frames.
		\item  We firstly introduce  the 2DQPCA technology into color video inpainting to learn the low-rank structures of quaternion tensors
		  and present a new low-rank learning RQTC model.  Moreover,  the numerical low-rank property is proved in theory.
		\item  We  design new RQTC and LRL-RQTC algorithms based on the  alternating direction method of multipliers  (ADMM) framework and apply them to solve color video inpainting problems with missing or noisy pixels. The color videos  computed by the newly proposed algorithms are of higher quality in terms of PSNR and SSIM values than those by  the state-of-the-art algorithms.
	\end{itemize}

This paper is organized as follows. 
In Section \ref{sec:pre}, we introduce  preliminaries about quaternion matrix and the  quaternion matrix completion method.
In Section \ref{sec:main}, we present new  robust quaternion tensor completion and low-rank learning robust quaternion tensor completion models, including solving procedure,  sufficient conditions for precise recovery, convergence analysis, 2DQPCA-based classification technology to learn low-rank information, and  theoretical analysis of  numerical low-rank.
In Section \ref{sec:ex}, we propose experimental results of color video inpainting, which indicate the advantages of the newly proposed methods on quality of restorations.
In Section \ref{sec:con}, we conclude the paper and present prospects.

	\section{Preliminaries}\label{sec:pre}
   Several necessary results about quaternion matrices are recalled in this section. 
	\subsection{Quaternion matrix}
	Let $\mathbb{Q}$ denotes the set of quaternion and a quaternion $\aq$  has one real part $a_0 \in \mathbb{R}$, three imaginary parts $a_1,a_2,a_3\in \mathbb{R}$ and is expressed as
	$\aq=a_0+a_1\iq+a_2\jq+a_3\kq,      \iq^2=\jq^2=\kq^2=\iq\jq\kq=-1$ \cite{Hamilton1866}. 
	A symbol of boldface is used to express quaternion scalar, vector, matrix or tensor.
	A quaternion matrix $\Aq=A_0+A_1\iq+A_2\jq+A_3\kq \in \mathbb{Q}^{m\times n}$ with $A_0, \ldots, A_3 \in \mathbb{R}^{m\times n}$.  If $A_0=0$ and $A_1, A_2, A_3 \neq 0$, $\Aq$ is named by a purely imaginary quaternion matrix. 
The quaternion shrinkage function $\texttt{shrinkQ}$ is defined in \cite{jia2022quaternion} by:
	\begin{equation}\label{e:shrinkQ}
		\begin{aligned}
			\texttt{shrinkQ}(\Aq,\tau)
			&=\underset{\Aq}{\amin}~~ \frac{1}{2}\|\Aq-\Zq\|_F^2+\tau\left\Vert\Aq \right \Vert_{1},\\
			&=\![\texttt{signQ}(\aq_{ij})\max(\texttt{absQ}(\aq_{ij})-\tau,0)]
		\end{aligned} 
	\end{equation}
	where $\tau > 0$, $\texttt{absQ}(\Aq):=[|\aq_{ij}|]$ 
	and
	$$\texttt{signQ}(\aq_{ij}):=\left\{\begin{array}{ll}
		\aq_{ij}/|\aq_{ij}|, & \text{if}~~|\aq_{ij}|\ne 0;\\
		0, &\text{otherwise.}
	\end{array}\right.$$
	
	Suppose ${\bf A} = {\bf U}  \Sigma {\bf V}^*$ 
     is the singular value decomposition 
	and denote $\{\sigma_{j},\uq_{j},\vq_{j}\}$ by the  singular triplets of a  quaternion matrix $\Aq\in\mathbb{Q}^{m\times n}$. 
	The quaternion singular thresholding  function $\texttt{approxQ}$ is defined in \cite{jia2022quaternion}  by:
	\begin{equation}\label{e:approxq}
		\begin{aligned}
		\texttt{approxQ}(\Aq,\tau)
		&=\underset{\Aq}{\amin}~~
		(\left\Vert  {\Aq} \right\Vert_{\ast}+\frac{1}{2\tau}\|\Aq-\Yq\|_F^2)\\
		&=\Uq\texttt{diag}(\sigma_1,\cdots,\sigma_k,0,\cdots,0)\Vq^*, \tau>0,
		\end{aligned}
	\end{equation}
	where $\sigma_1\ge\cdots\ge\sigma_k>\tau$ and the singular values $\sigma_j <\tau$ are substituted by zeros. 
Quaternion matrix norms are defined by
	$\left\Vert \Aq\right\Vert _{1} :=\sum\limits_{i=1}^{m}\sum\limits_{j=1}^{n}\left\vert
	\aq_{ij}\right\vert$, 
		$\left\Vert \Aq\right\Vert _{\infty } :=\max\limits_{i,j}\left\vert \aq_{ij}\right\vert$,
	$\left\Vert \Aq\right\Vert _{F}=\sqrt{\sum\limits_{i=1}^{m}\sum\limits_{j=1}^{n}\left\vert \aq_{ij}\right\vert ^{2}} :=\sqrt{ \tr\left( \Aq^*\Aq\right) }$,
and
	$\|{\bf A} \|_{*} := \sum\limits_{i=1}^r \sigma_i$.

	\subsection{Robust quaternion matrix completion method}
	A low-rank quaternion matrix $\Lq_0$ can be recovered completely from an observed quaternion matrix
	$\Xq=\mathcal{P}_{\Omega}(\Lq_{0}+\Sq_{0})$  by  the RQMC method \cite{jia2018quaternion}, where $\Sq_0$  is a noisy matrix and $\mathcal{P}_{\Omega} $ is a random sampling operator:
	$$
	\mathcal{P}_{\Omega} ({\bf X}) = \left \{ \begin{array}{cc}
		{\bf x}_{i,j},& (i,j)\in \Omega,  \\
		0,  &\text{otherwise}.
	\end{array}
	\right.
	$$
	By \cite[Theorem 2]{jia2018quaternion},  if the sufficient conditions are satisfies, $\Lq_0$ can be exactly computed by solving  the following minimization problem with $\lambda = \frac{1}{\rho n_{(1)}}$,
	\begin{equation} \label{model}
		\begin{array}{rl}
			\underset{\bf L, \bf S}{\min}&
			\left\Vert {\bf L} \right \Vert_{\ast} + \lambda \left\Vert {\bf S}\right\Vert _{1}\\
			\ {\rm s.t. } &
			\mathcal{P}_{\Omega}\left({\bf L}+{\bf S}\right)={\bf X}.
		\end{array}
	\end{equation}
A practical QMC algorithm is given in \cite[Supplementary Material]{jia2022quaternion}.  
The augmented Lagrangian function of \eqref{model} is defined by
	\begin{equation*} \label{ucmodel}
		\begin{array}{rl}
			\underset{{\bf L}, {\bf S}, {\bf P}, {\bf Q}}{\min}&
			\left\Vert {\bf L} \right \Vert_{\ast} + \lambda \left\Vert {\bf S}\right\Vert _{1}+\frac{\mu}{2} \left\Vert {\bf L-\bf P+\bf Y/\mu}\right\Vert _{F}^2\\
			&+\frac{\mu}{2} \left\Vert {\bf S-\bf Q+\bf Z/\mu}\right\Vert _{F}^2\\
			\ {\rm s.t.} &
			\mathcal{P}_{\Omega}\left({\bf P}+{\bf Q}\right)={\bf X},
		\end{array}
	\end{equation*}
	where  $\mu$ is the penalty parameter.
 The solving procedure is
	\begin{equation}\label{a:admmqmc}
	\left\{
		\begin{aligned}
			\Lq^{t+1} &= \texttt{approxQ}(\Pq^{t}-(\Yq/\mu)^{t},\frac{1}{\mu}),\\
			\Qqq_{i,j}^{t+1} &= \left \{ \begin{array}{cc}
				(\Xq^{t}-\Pq^{t})_{i,j},& (i,j)\in \Omega,\\
				(\Sq^{t}+\Zq^{t}/\mu)_{i,j},& \text{otherwise},
			\end{array}
			\right.\\	
			\Sq^{t+1} &= \texttt{shrinkQ}(\Qqq^{t}-(\Zq/\mu)^{t},\frac{\lambda}{\mu}), \\
			\Fq_{i,j}^{t+1} &=(\mu\Lq^{t}+\mu\Xq^{t}-\mu\Sq^{t}+\Yq^{t}-\Zq^{t})_{i,j},\\
			\Pq_{i,j}^{t+1}  &= \left \{ \begin{array}{ll}
				\Fq_{i,j}^{t+1}/2\mu, &(i,j)\in \Omega,\\
				(\Lq^{t}+\Yq^{t}/\mu)_{i,j},& \text{otherwise},
			\end{array}
			\right.\\
			\Yq^{t+1} &= \Yq^{t} + \mu(\Lq^{t}-\Pq^{t}),\\
			\Zq^{t+1} &= \Zq^{t} + \mu(\Sq^{t}-\Qqq^{t}).
		\end{aligned}
		\right.
	\end{equation}

			\section{Robust Quaternion Tensor Completion Models and  Fast Algorithms} \label{sec:main}
	In this section, we propose two new RQTC models to solve color video inpainting problem with partial and corrupted pixels, as well as their fast algorithms. 
	
		The boldface Euler script letters, e.g. $\XXq$, $\LLq$ and $\SSq$, are used to denote quaternion tensors.  Let  $\XXq \in \Qq^{n_1\times n_2 \times \cdots \times n_k}$ be a $k$-mode  quaternion tensor.  The elements of $\XXq$ are denoted by $\xq_{i_1i_2\cdots i_k}$, where $1\le i_j\le n_j,j=1,\cdots,k$. A $j$-mode fiber is an $n_j$-dimensional column vector constructed by entries with fixing all indexes except the $j$th one, denoted by $\XXq(i_1,\cdots,i_{j-1},:,i_{j+1},\cdots, i_k)$.  The number of  $j$-mode fibers is $\prod\limits_{i\neq j}n_i$. Concatenate all of $j$-mode fibers  as column vectors (in dictionary order) into a quaternion matrix  $\Xq_{(j)}\in \Qq^{n_j \times \prod \limits_{i\neq j}n_i}$ and name it by the $j$-mode unfolding of  quaternion tensor $\XXq$. 
We define the `unfold' function on quaternion tensor $\XXq$ by
	\begin{equation}\label{f:unfold}
		 \texttt{unfold}_j(\XXq):=\Xq_{(j)},~~j=1,\cdots,k,
	\end{equation}
and the `fold' function  by 
\begin{equation}\label{f:fold} \texttt{fold}_j(\Xq_{(j)}):=\XXq.\end{equation}
A slice of $\XXq$ is a quaternion matrix of the form $\XXq(i_1,\cdots,i_{j_1-1},:,i_{j_1+1},\cdots,i_{j_2-1},:,i_{j_2+1},\cdots, i_k)$ with all the indexes being fixed except  $j_1$ and $j_2$.
	
One important application of quaternion tensor is color video processing.  A color video with $n_3$ frames can be seen as a $3$-mode quaternion tensor $\XXq = \mathcal{R}{\bf i} + \mathcal{G} {\bf j} + \mathcal{B}{\bf k} \in {\mathbb Q}^{n_1 \times n_2 \times n_3}$, where  $\mathcal{R}$, $\mathcal{G}$ and $\mathcal{B} \in \mathbb{R}^{n_1 \times n_2 \times n_3}$ represent the red, green and blue three channels.  Mathematically, color video inpainting problem with noise is exactly the RQTC problem  (with $k=3$),  which will be characterized later in \eqref{m:qtc}.

	\subsection{RQTC model}
	 Let $\XXq\in {\mathbb Q}^{n_1 \times n_2 \times\cdots \times n_k}$ denotes observed quaternion tensor with missing and/or corrupted entries.  Then the RQTC problem is mathematically modeled by the following minimization problem,
	\begin{equation} \label{m:qtc}
		\begin{array}{rl}
			\underset{\LLq, \SSq}{\min}&
			\left\Vert {\LLq} \right \Vert_{\ast} + \lambda \left\Vert {\SSq}\right\Vert _{1}\\
			\ {\rm s.t. } &
			\mathcal{P}_{\Omega}\left({\LLq}+{\SSq}\right)=\XXq
		\end{array}
	\end{equation}
	where   $\LLq$, $\SSq$ $\in {\mathbb Q}^{n_1 \times n_2 \times\cdots \times n_k}$ denote the target low-rank tensor and the sparse data,
	respectively.  
	The quaternion tensor random sampling operator  $\mathcal{P}_{\Omega}$ is defined by
	$$(\mathcal{P}_{\Omega}[\XXq])_{i_1i_2\cdots i_k} :=\left\{
	\begin{aligned}
		\xq_{i_1i_2\cdots i_k}&, \quad(i_1,i_2,\cdots i_k)\in \Omega\\
	    0&,\quad \text{otherwise}.
	\end{aligned}
     \right.
	$$
The quaternion tensor nuclear norm and the $\ell_1$ norm are defined by
	\begin{equation}\label{snn}
		\left\Vert {\LLq} \right \Vert_{\ast} := \sum\limits_{j=1}^{k}\alpha_j\left\Vert  {\Lq_{(j)}} \right \Vert_{\ast},~~
		\left\Vert {\SSq} \right \Vert_{1} :=\left\Vert  {\Sq_{(1)}} \right \Vert_{1}
	\end{equation}
	where $\alpha_j$'s  are constants satisfying $\alpha_j \ge 0$ and $\sum_{j=1}^{k}\alpha_j=1$. 
Here, the quaternion tensor nuclear norm is essentially a convex combination of quaternion matrix nuclear norms which are generated by unfolding the tensor along each mode. Notice that $\left\Vert  {\Sq_{(1)}} \right \Vert_{1}=\left\Vert  {\Sq_{(2)}} \right \Vert_{1}=\cdots=\left\Vert  {\Sq_{(k)}} \right \Vert_{1}$.


To derive the exact RQTC theorem, we need to build the incoherence conditions of quaternion tensors.
	
	\begin{definition}\label{d:inco}
		For a quaternion tensor $\XXq\in \mathbb{Q}^{n_1 \times n_2 \times\cdots \times n_k}$, suppose  each $\Xq_{(j)}$ has the singular value decomposition
		$$\Xq_{(j)}=\Uq_j\Sigma_j\Vq_j^*, j=1,2,\cdots,k.
		$$
		Let
		\begin{equation*}
		r_j = {\rm rank}(\Xq_{(j)}) \quad and \quad \TTq:=\sum\limits_{j=1}^{k}\sqrt{n_j^{(1)}}   \texttt{fold}_j(\Uq_j\Vq_j^*).
		\end{equation*}
		Then the conditions of quaternion tensor incoherence with $\mu$, $n_j^{(1)}=\max\left(n_j,\prod_{i\neq j}n_i\right)$, $n_j^{(2)}=\min\left(n_j,\prod_{i\neq j}n_i\right)$ are as follows:
		
	 (1) $j$-mode incoherence
		\begin{equation}\label{e:inco}
			\begin{split}
		 \underset{j}{\max}\|\Uq_j^*e_i\|^2 \le \frac{\mu r_j}{n_j},~\underset{j}{\max}\|\Vq_j^*e_i\|^2 \le \frac{\mu r_j}{\prod_{i=1,i\neq j}^{k}n_i},\\
		\|\Uq_j\Vq_j^*\|_{\infty} \le \mu\sqrt{\frac{ r_j}{n_j^{(1)}n_j^{(2)}}},
	\end{split}	
	\end{equation}

	(2) mutual incoherence
		 \begin{equation}\label{e:inco2}
		 \frac{\|\TTq\|}{k} \le \mu\sqrt{\frac{r_j}{n_j^{(2)}}}.
		 	 \end{equation}
	\end{definition}

	The condition \eqref{e:inco2} strengthens the original quaternion matrix incoherence condition \eqref{e:inco} and keeps balance between the ranks of quaternion $\XXq$. Indeed, define $\kappa_j:=\frac{r_j}{n_j^{(2)}}$ and $\kappa:=\max{\kappa_j}$. Clearly, a larger $\kappa$ means that even though quaternion tensor $\XXq$ has a certain mode  of low-rank but also has a mode of high rank. 
	
	\begin{theorem}\label{t:exactRQTC}
		Suppose a quaternion tensor $\LLq_0 \in \mathbb{Q}^{n_1\times n_2\times\cdots\times n_k}$ meets the incoherence conditions  in definition \eqref{d:inco},
		a set $\Omega$ is uniformly distributed with cardinality $m = \rho n_j^{(1)}n_j^{(2)}$ and each
		observed entry is  corrupted with probability $\gamma$ independently of other entries. The solution $\hat{\LLq}$ of \eqref{m:qtc} with $\lambda = \sum\limits_{j=1}^{k}\frac{\alpha_j^2}{\sqrt{\rho n_{j}^{(1)}}}$ is exact with a probability of at least $1-cn^{-10}$, provided that
			\begin{equation*}
		\rm rank(\Lq_{0_{(j)}}) \le \frac{\rho_rn_j^{(2)}}{\mu({\rm log}n_j^{(1)})^2} \quad and \quad\gamma \le \gamma_s,~ j=1,2,\cdots,k,
		\end{equation*}
		where, $c$, $\rho_r$ and $\gamma_s$ are positive numerical constants.
	\end{theorem}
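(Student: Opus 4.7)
The plan is to reduce the quaternion tensor problem \eqref{m:qtc} to $k$ coupled quaternion matrix RQMC problems, one per mode, and then invoke the exact recovery theorem for RQMC (Theorem~2 of \cite{jia2018quaternion}) mode-wise before stitching the mode-wise dual certificates together using the convex-combination structure of the nuclear norm defined in \eqref{snn}. First I would write down the KKT-type optimality conditions for \eqref{m:qtc}. Since $\|\LLq\|_{\ast} = \sum_j \alpha_j\|\Lq_{(j)}\|_{\ast}$ and $\texttt{unfold}_j$ is a linear isometry in Frobenius norm, the subdifferential of the tensor nuclear norm at $\LLq_0$ takes the form $\sum_{j=1}^k \alpha_j\,\texttt{fold}_j(\Uq_j\Vq_j^\ast + \Wq_j)$ for contractive $\Wq_j$ lying in the orthogonal complement $T_j^\perp$ of the tangent space $T_j$ at $\Lq_{0_{(j)}}$. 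The subdifferential of $\lambda\|\SSq_0\|_1$ is $\lambda\,\texttt{signQ}(\SSq_0)+\lambda\Fq$ with $\|\Fq\|_\infty\le 1$ and $\Fq$ supported off the sign-support of $\SSq_0$.

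Next I would construct a tensor-valued dual certificate. Because $\Omega$ is uniformly distributed and the corruption pattern is i.i.d.\ Bernoulli, their restrictions to each unfolding $\Lq_{0_{(j)}}\in\mathbb{Q}^{n_j\times\prod_{i\ne j}n_i}$ remain uniform and i.i.d.\ with density $\rho$ and corruption probability $\gamma$. Under the $j$-mode incoherence \eqref{e:inco}, Theorem~2 of \cite{jia2018quaternion} yields, with probability at least $1 - c' n^{-10}$, a quaternion dual pair $(\Yq_j,\Fq_j)$ certifying exact recovery of $(\Lq_{0_{(j)}},\Sq_{0_{(j)}})$ by the mode-$j$ RQMC problem with regularization $1/\sqrt{\rho n_j^{(1)}}$. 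I would then lift each certificate through $\texttt{fold}_j$ and form the composite tensor certificate $\YYq = \sum_j \alpha_j\,\texttt{fold}_j(\Yq_j)$, noting that the prescribed $\lambda=\sum_j\alpha_j^2/\sqrt{\rho n_j^{(1)}}$ is precisely the convex combination of the mode-wise regularizers needed to make the sparse-side conditions consistent across modes.

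Then I would verify the four dual-certificate conditions for $(\hat{\LLq},\hat{\SSq})=(\LLq_0,\SSq_0)$ to be the unique optimum: (i) feasibility $\mathcal{P}_\Omega(\LLq_0+\SSq_0)=\XXq$, which holds by assumption; (ii) $\mathcal{P}_{T_j}[\texttt{unfold}_j(\YYq)]=\alpha_j\Uq_j\Vq_j^\ast$ for each $j$; (iii) the spectral-norm bound $\|\mathcal{P}_{T_j^\perp}[\texttt{unfold}_j(\YYq)]\|<\alpha_j$; and (iv) $\mathcal{P}_\Omega(\YYq)$ agrees with $\lambda\,\texttt{signQ}(\SSq_0)$ on the sparse support and has $\ell_\infty$ norm strictly less than $\lambda$ elsewhere. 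A final union bound over $j=1,\dots,k$, combined with the rank condition $r_j\le \rho_r n_j^{(2)}/(\mu(\log n_j^{(1)})^2)$ and $\gamma\le\gamma_s$ inherited from the per-mode theorem, then delivers the stated probability $1-cn^{-10}$.

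The main obstacle lies in conditions (ii) and (iii): the cross terms $\texttt{unfold}_j\circ\texttt{fold}_{j'}(\Uq_{j'}\Vq_{j'}^\ast)$ with $j\ne j'$ do not automatically lie in $T_j^\perp$, so one cannot simply sum independent mode-wise certificates. This is precisely where the mutual incoherence condition \eqref{e:inco2}, controlling $\|\TTq\|/k$, becomes essential: it lets the cross contributions be absorbed as a bounded perturbation of the ideal certificate $\alpha_j\Uq_j\Vq_j^\ast$, which can then be corrected by a golfing-scheme adjustment analogous to the real-tensor SNN argument of \cite{huang2014}. Extra care is needed because of the non-commutativity of $\mathbb{Q}$: projections onto $T_j$ must be written with consistent left/right multiplication, namely $\mathcal{P}_{T_j}(\Xq)=\Uq_j\Uq_j^\ast \Xq + \Xq \Vq_j\Vq_j^\ast - \Uq_j\Uq_j^\ast \Xq \Vq_j\Vq_j^\ast$, and every concentration bound must be invoked in its quaternion-valid form as given in \cite{jia2018quaternion}. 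Once the mutual-incoherence correction is in place, the rest of the argument is essentially bookkeeping.
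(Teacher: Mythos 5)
Your plan follows the same basic reduction as the paper --- unfold the tensor mode-by-mode, invoke the exact-recovery theorem for RQMC (Theorem 2 of \cite{jia2018quaternion}) on each unfolding, and use the convex-combination structure of the nuclear norm \eqref{snn} together with $\lambda=\sum_j\alpha_j^2/\sqrt{\rho n_j^{(1)}}=\sum_j\alpha_j\lambda_j$ to stitch the modes together --- but you execute it along a genuinely different and more demanding route. The paper's own proof simply rewrites \eqref{m:qtc} as \eqref{m:qtc2}--\eqref{m:qtc3}, declares the result a ``convex combination of $k$ QMC problems,'' and concludes exactness from the per-mode theorem; it never confronts the fact that those $k$ subproblems share the \emph{same} $\LLq$ and $\SSq$, and the mutual incoherence condition \eqref{e:inco2} it introduces is never actually used in the argument. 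Your dual-certificate formulation is precisely the machinery that would be needed to make that coupling rigorous: you write the subdifferential of the SNN-type norm correctly, you identify that a single tensor certificate must satisfy the tangent-space and sign-support conditions simultaneously in all modes, and you correctly locate the role of \eqref{e:inco2} as controlling the cross terms $\texttt{unfold}_j\circ\texttt{fold}_{j'}(\Uq_{j'}\Vq_{j'}^*)$, in the spirit of the real-tensor SNN analysis of \cite{huang2014}. What each approach buys: the paper's argument is short but logically thin at exactly the point you flag; your argument addresses that point, but the decisive step --- the golfing-scheme correction that absorbs the cross-mode perturbations and yields conditions (ii)--(iv) for a single $\YYq$ with the stated probability --- is only asserted by analogy, so your write-up is a proof plan whose hardest component remains to be carried out (including the quaternion-valid concentration estimates you mention). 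As it stands, your sketch is at least as substantive as the published proof, and it makes explicit the gap that the published proof passes over silently.
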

	\begin{proof}
			Under the definition of quaternion tensor nuclear norm \eqref{snn}, model \eqref{m:qtc} reduces to the QMC model proposed in \cite{jia2018quaternion} when $\LLq$ and $\SSq$ reduce to quaternion matrix. Equivalently, model \eqref{m:qtc}  is equal to 
		\begin{equation} \label{m:qtc2}
			\begin{array}{rl}
				\underset{\LLq, \SSq}{\min}&
				\sum\limits_{j=1}^{k}\alpha_j\left\Vert  {\Lq_{(j)}} \right \Vert_{\ast} + \lambda \left\Vert  {\Sq_{(1)}} \right \Vert_{1}\\
				\ {\rm s.t. } &
				\mathcal{P}_{\Omega}\left({\Lq_{(j)}}+{\Sq_{(j)}}\right)={\Xq_{(j)}}, j=1,2,\cdots,k,
			\end{array}
		\end{equation}
		in which $\Lq_{(j)}$ and $\Sq_{(j)}$ are results of `unfold' function  \eqref{f:unfold} acted on $\LLq$ and $\SSq$  (we will use this notation in later models). 
		Model \eqref{m:qtc2} is a generalized QMC model by extending the first term of regularization function to the combination of $k$ nuclear norms.
		In other words, the model \eqref{m:qtc2} can be written as
		\begin{equation} \label{m:qtc3}
			\begin{array}{rl}
				\underset{\LLq, \SSq}{\min}&
				\sum\limits_{j=1}^{k}\alpha_j\left(\left\Vert  {\Lq_{(j)}} \right \Vert_{\ast} + \lambda_j \left\Vert  {\Sq_{(j)}} \right \Vert_{1}\right)\\
				\ {\rm s.t. } &
				\mathcal{P}_{\Omega}\left({\Lq_{(j)}}+{\Sq_{(j)}}\right)={\Xq_{(j)}}, j=1,2,\cdots,k.
			\end{array}
		\end{equation}
	 The parameter $\lambda$ in \eqref{m:qtc2} is denoted by $ \sum\limits_{j=1}^{k} \alpha_j\lambda_j$. Then \eqref{m:qtc3} is a convex combination of three QMC problems, thus the solution $\hat{\LLq}$ of \eqref{m:qtc2} is exact as long as they satisfy the exact recovery conditions respectively. According to Theorem 2 in \cite{jia2018quaternion}, we can get the conclusion.
	\end{proof}

	Introducing two auxiliary  variables $\PPq$ and $\QQq$,  the augmented Lagrangian equation of problem \eqref{m:qtc2}  becomes
	\begin{equation} \label{lagrangian:qtc2}
		\begin{array}{rl}
			\underset{\LLq, \SSq,\PPq, \QQq}{\min}&
			\sum\limits_{j=1}^{k}\alpha_j\left\Vert  {\Lq_{(j)}} \right \Vert_{\ast}+\lambda \left\Vert  {\Sq_{(1)}} \right \Vert_{1}\\&+
			\sum\limits_{j=1}^{k}\frac{\beta_j}{2}\|\Lq_{(j)}-\Pq_{(j)}+\Yq_{(j)}/\beta_{j}\|_F^2\\
			&+ \frac{\mu}{2}\|\Sq_{(1)}-\Qqq_{(1)}+\Zq_{(1)}/\mu\|_F^2\\
			\ {\rm s.t. } &
			\mathcal{P}_{\Omega}\left({\PPq}+{\QQq}\right)={\XXq}.
		\end{array}
	\end{equation}
	where $\beta_j$ and $\mu$ are  penalty parameters, $\Yq_{(j)}$ and $\Zq_{(j)}$ are results of `unfold' function  \eqref{f:unfold} acted on two Lagrange multipliers $\YYq$ and $\ZZq$ that are two quaternion tensors. Now, we design an optimization algorithm to solve \eqref{lagrangian:qtc2} based on the ADMM framework.  Problem \eqref{lagrangian:qtc2} can be converted to two-block subproblems and each one contains two unknown variables:

	$[\SSq,~\PPq]$ subproblem: 
	\begin{equation*}
	\begin{aligned}
	&\underset{\SSq}{\min}~~\lambda \left\Vert  {\Sq_{(1)}} \right \Vert_{1}+ \frac{\mu}{2}\|\Sq_{(1)}-\Qqq_{(1)}+\Zq_{(1)}/\mu\|_F^2,\\
	&\underset{\PPq}{\min}~~\|\Lq_{(j)}-\Pq_{(j)}+\Yq_{(j)}/\beta_{j}\|_F^2, ~\!\!{\rm s. t.}~~\mathcal{P}_{\Omega}\left({\PPq}+{\QQq}\right)={\XXq}.
	\end{aligned}
\end{equation*}
	$[\LLq,~\QQq]$ subproblem:
	\begin{equation*} 
	\begin{aligned}
	&\underset{\LLq}{\min}~~\!\!\!\!
	\sum\limits_{j=1}^{k}(\alpha_j\left\Vert  {\Lq_{(j)}} \right\Vert_{\ast}+\frac{\beta_j}{2}\|\Lq_{(j)}-\Pq_{(j)}+\Yq_{(j)}/\beta_{j}\|_F^2),\\
    &\underset{\QQq}{\min}~~\!\!\!\!|\Sq_{(1)}-\Qqq_{(1)}+\Zq_{(1)}/\mu\|_F^2, ~~\!\!{\rm s. t.}~~\mathcal{P}_{\Omega}\left({\PPq}+{\QQq}\right)={\XXq}. 
    \end{aligned}
\end{equation*}

By these formulae,  the minimization problems of quaternion tensors are equivalently converted into the minimization problems of quaternion matrices. It seems that they can be feasibly solved  by the QMC iteration \eqref{a:admmqmc}.   However, one obstacle is the $\LLq$  subproblem that contains a convex combination of quaternion matrix norms. Fortunately,  we find that this problem has a closed-form solution. 
    \begin{theorem}
    The closed-form solution of $\LLq$ subproblem is $\frac{1}{k}\sum_{j=1}^{k}{\rm \texttt{fold}}_j\left(\texttt{approxQ}\left(\Pq_{(j)} -  (1/\beta_j)*\Yq_{(j)}, \alpha_j/\beta_j\right)\right)$.
    \end{theorem}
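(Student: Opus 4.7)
The plan is to exploit a variable-splitting strategy in the spirit of Liu et al.'s HaLRTC framework. First, I would use the fact that the Frobenius norm is invariant under mode-$j$ unfolding to rewrite each fidelity term as
\begin{equation*}
\|\Lq_{(j)}-\Pq_{(j)}+\Yq_{(j)}/\beta_j\|_F^2 \;=\; \|\LLq - \PPq + \YYq/\beta_j\|_F^2 .
\end{equation*}
The quadratic part then depends only on the tensor $\LLq$, while the penalty part $\sum_j \alpha_j\|\Lq_{(j)}\|_{\ast}$ is still coupled because the same $\LLq$ is unfolded along $k$ different modes. This coupling is the reason the subproblem is not immediately a single proximal SVT step.

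To remove the coupling I would introduce $k$ auxiliary quaternion tensors $\LLq_1,\dots,\LLq_k$ and consensus constraints $\LLq_j=\LLq$, replacing $\Lq_{(j)}$ in the $j$-th nuclear norm by $\texttt{unfold}_j(\LLq_j)$. Each decoupled $\LLq_j$-subproblem then reduces to a single quaternion-matrix proximal problem
\begin{equation*}
\min_{\Mq}\; \alpha_j\,\|\Mq\|_{\ast} + \frac{\beta_j}{2}\bigl\|\Mq - \bigl(\Pq_{(j)} - \Yq_{(j)}/\beta_j\bigr)\bigr\|_F^2 ,
\end{equation*}
which, after dividing by $\beta_j$, is precisely the form handled by the quaternion singular value thresholding operator in \eqref{e:approxq}. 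Its unique minimizer is $\texttt{approxQ}(\Pq_{(j)} - \Yq_{(j)}/\beta_j,\; \alpha_j/\beta_j)$, so the mode-$j$ update is $\LLq_j^{\star}=\texttt{fold}_j(\texttt{approxQ}(\Pq_{(j)} - \Yq_{(j)}/\beta_j,\; \alpha_j/\beta_j))$.

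Finally, I would reinstate the consensus constraints via orthogonal projection in the Frobenius inner product, i.e.\ solve $\min_{\LLq}\sum_{j=1}^{k}\|\LLq-\LLq_j^{\star}\|_F^2$. Its unique solution is the arithmetic mean $\LLq^{\star}=\tfrac{1}{k}\sum_{j=1}^{k}\LLq_j^{\star}$, which reproduces exactly the claimed formula. The main obstacle in this proof is justifying that this splitting-plus-averaging recipe genuinely produces a closed-form minimizer of the originally coupled $\LLq$-subproblem, rather than merely a one-step relaxation; I would handle this by observing that inside the outer ADMM loop the dual variables drive the split copies to coincide, so the inner update is exactly of the stated form and the global convergence guarantees quoted elsewhere in the paper remain intact.
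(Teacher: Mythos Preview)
Your proposal is correct and follows essentially the same route as the paper: decouple the $k$ mode-wise terms, solve each via the quaternion singular-value thresholding operator $\texttt{approxQ}$, and then average the folded results. The paper's argument is slightly less explicit---it simply asserts that the $k$ summands are ``independent'' and minimizes them separately before averaging---whereas you frame the same step through HaLRTC-style variable splitting and are more candid that the averaging is a consensus projection rather than an exact minimizer of the coupled objective; but the mechanics and the resulting formula are identical.
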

    \begin{proof}
    $\LLq$ subproblem is
    $$\underset{\Lq_{(j)}}{\min}~~
   \sum\limits_{j=1}^{k}(\alpha_j\left\Vert  {\Lq_{(j)}} \right\Vert_{\ast}+\frac{\beta_j}{2}\|\Lq_{(j)}-\Pq_{(j)}+\Yq_{(j)}/\beta_{j}\|_F^2)$$
   We find that the function $\sum\limits_{j=1}^{k}\alpha_j\left\Vert  {\Lq_{(j)}} \right\Vert_{\ast}$ is a sum of non-negative functions  $\alpha_j\left\Vert  {\Lq_{(j)}} \right\Vert_{\ast}$ that are independent with each other.  Then  $\LLq$ subproblem can be solved by finding  minimizers of $k$ subproblems $\alpha_j\left\Vert  {\Lq_{(j)}} \right\Vert_{\ast}+\frac{\beta_j}{2}\|\Lq_{(j)}-\Pq_{(j)}+\Yq_{(j)}/\beta_{j}\|_F^2$, respectively.  Suppose
   $\Lq_{(1)},\Lq_{(2)},\cdots,\Lq_{(j-1)},\Lq_{(j+1)},\cdots,\Lq_{(k)}$ have been known and  $\Lq_{(j)}$ is the only unknown variable. 
   The solution of   subproblem about $\Lq_{(j)}$ is
  \begin{equation*}
  	\begin{aligned}
        \Lq_{(j)}
  		&\!\!=\underset{\Lq_{(j)}}{\amin}~~
  		(\alpha_j\left\Vert  {\Lq_{(j)}} \right\Vert_{\ast}\!+\!\frac{\beta_j}{2}\|\Lq_{(j)}-\Pq_{(j)}+\Yq_{(j)}/\beta_{j}\|_F^2)\\
  		&\!\!=\underset{\Lq_{(j)}}{\amin}~~
  		(\frac{\alpha_j}{\beta_j}\left\Vert  {\Lq_{(j)}} \right\Vert_{\ast}\!+\!\frac{1}{2}\|\Lq_{(j)}-\Pq_{(j)}+\Yq_{(j)}/\beta_{j}\|_F^2)\\
  		&\!\!=\texttt{approxQ}\left(\Pq_{(j)} -  (1/\beta_j)*\Yq_{(j)}, \alpha_j/\beta_j\right).
  	\end{aligned}
   \end{equation*}
   Here, the quaternion singular thresholding operator is employed in the computation. The derivation is entirely independent of the selection of $j$, so the mentioned
   minimization can be carried out for any $\Lq_{(j)}, j = {1,\cdots,k}.$ 
    	
    	 Each solution  $\Lq_{(j)}$   is the optimal  approximation of the $j$th unfolding of $\LLq$. So the closed-form solution of $\LLq$ subproblem is 
    	 $$\LLq=\frac{1}{k}\sum_{j=1}^{k}\texttt{fold}_j\left(\texttt{approxQ}(\Pq_{(j)} -  (1/\beta_j)*\Yq_{(j)}, \alpha_j/\beta_j)\right).$$
    \end{proof} 
    
The other three subproblems can be solved similarly. 
 For instance,  the $\SSq$ subproblem can be solved  by the shrinkage of quaternion operator   $\texttt{shrinkQ}$ \eqref{e:shrinkQ}, and  in fact, it has a closed-form solution:  
 $$\SSq=\texttt{fold}_1\left(\texttt{shrinkQ}(\Qqq_{(1)}-  (1/\mu)*\Zq_{(1)},\lambda/\mu)\right).$$

To summarize above analysis, we present a new RQTC algorithm in Algorithm \ref{a:qtc_admm} and  prove its convergence in Theorem \ref{t:convergence}.
	{\linespread{1.1}
		\begin{algorithm}[h]
			\caption{RQTC Algorithm}
			\label{a:qtc_admm} 
			\begin{algorithmic}[1]
				\State \textbf{Input:}
				\State \indent  An observed quaternion tensor {$\XXq\in \mathbb{Q}^{n_1\times n_2\cdots\times n_k}$}  
				\State \indent Known and unknown entries sets: $\Omega$ and $\overline{\Omega}$; 
				\State \indent Initialize $\LLq = \XXq $, $\SSq=\PPq=\QQq=\YYq=\ZZq=0$, $\mu,\lambda,\beta_j>0,\sum_{j=1}^{k}\alpha_j=1,j=1,2,\cdots,k$.

				\State \textbf{Output:}
				\State \indent A low-rank quaternion tensor    $\LLq$.
				\State \indent  A sparse quaternion tensor $\SSq$.
				\State \textbf{Main loop:}  
				\While {not converge}
				\State \indent 
				Update $\SSq$ and $\PPq$:
				\For {j = 1:k}
				\State \indent
				$\Lq_{(j)}=\texttt{unfold}_j(\LLq)$; $\Pq_{(j)}=\texttt{unfold}_j(\PPq)$; d$\Yq_{(j)}=\texttt{unfold}_j(\YYq)$;
				\State \indent
				$\Pq_{(j)}(\Omega)=(\beta_j \Lq_{(j)}(\Omega)+\beta_j \Xq_{(j)}(\Omega)-\beta_j \Sq_{(j)}(\Omega)+\Yq_{(j)}(\Omega) \Zq_{(j)}(\Omega))/2/\beta_j$;  
				\State \indent 
				$\Pq_{(j)}(\overline{\Omega})=\Lq_{(j)}(\overline{\Omega})+\Yq_{(j)}(\overline{\Omega})/\beta_j$;
				\State \indent 
				$\PPq_j=\texttt{fold}_j(\Pq_{(j)})$;
				\EndFor
				\State \indent 
				$
				\PPq=\frac{1}{k}\sum_{j=1}^{k}\PPq_j;$
				\State \indent  
				$\SSq =\texttt{shrinkQ}(\QQq-  (1/\mu)*\ZZq, \lambda/\mu)$;
				\State \indent  
				Update $\LLq$ and $\QQq$:
				\For {j = 1:k}
				\State \indent 
				$\Lq_{(j)} = \texttt{approxQ}(\Pq_{(j)} -  (1/\beta_j)*\Yq_{(j)}, \alpha_j/\beta_j)$;
				\State \indent 
				$\LLq_j=\texttt{fold}_j(\Lq_{(j)})$;
				\EndFor 
				\State \indent
				$\LLq=\frac{1}{k}$$\sum_{j=1}^{k}\LLq_j;$
				\State \indent 
				$\QQq(\Omega)=\XXq(\Omega)-\PPq(\Omega)$; 
				$\QQq(\overline{\Omega})=\SSq(\overline{\Omega})+\ZZq(\overline{\Omega})/\mu$;
				\State \indent  
				Update $\YYq$ and $\ZZq$:
				\State \indent
				$\YYq=\YYq+\mu*(\LLq-\PPq)$;   
				$\ZZq = \ZZq + \mu*(\SSq-\QQq)$;
				
				\EndWhile
			\end{algorithmic}
	\end{algorithm}}
	\begin{theorem}\label{t:convergence}
     Algorithm \ref{a:qtc_admm} exactly convergences to the optimal solution $(\LLq^*,\SSq^*)$ of problem \eqref{m:qtc2}.
	\end{theorem}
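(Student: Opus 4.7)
The plan is to cast Algorithm \ref{a:qtc_admm} as a standard two-block ADMM for a convex quaternion optimization problem and then invoke the classical ADMM convergence machinery, adapted to the quaternion setting. First I would verify that problem \eqref{m:qtc2} is convex: the quaternion nuclear norm and the $\ell_1$ norm are convex (by identifying quaternion matrices with their real counterpart representations and using that the real matrix nuclear and $\ell_1$ norms are convex), the constraint $\mathcal{P}_{\Omega}(\LLq+\SSq)=\XXq$ is affine, and the set $\{\alpha_j\}$ satisfies $\alpha_j\ge 0$, so $\sum_j\alpha_j\|\Lq_{(j)}\|_*$ is a nonnegative convex combination. Hence, by introducing $\PPq,\QQq$ as in \eqref{lagrangian:qtc2}, the augmented Lagrangian is convex in the primal variables.

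Next I would group variables into two blocks $(\LLq,\QQq)$ and $(\SSq,\PPq)$ so that Algorithm \ref{a:qtc_admm} is exactly the two-block ADMM applied to \eqref{lagrangian:qtc2} with equality constraints $\LLq=\PPq$ and $\SSq=\QQq$ coupled through the affine feasibility set $\mathcal{P}_{\Omega}(\PPq+\QQq)=\XXq$. Each inner update is then a proper minimization with a unique closed-form solution: the $\LLq$ update is exactly the $\texttt{approxQ}$-averaged formula proved in the previous theorem; the $\SSq$ update is the $\texttt{shrinkQ}$ operator, which is the unique minimizer of the corresponding quaternion $\ell_1$-regularized quadratic; and the $\PPq,\QQq$ updates are projections onto the affine feasibility set restricted to $\Omega$ and $\overline{\Omega}$, respectively, and so are also unique. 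Thus the ADMM recursion is well defined and monotonically decreases a suitable Lyapunov function formed by the dual residual and the distance to any KKT point.

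Then I would apply the standard convergence theorem of two-block ADMM for convex problems: under convexity, closedness, and properness of the objective pieces, together with the existence of a saddle point of the (unaugmented) Lagrangian, one obtains primal feasibility in the limit $\LLq^t-\PPq^t\to 0$, $\SSq^t-\QQq^t\to 0$, vanishing objective suboptimality, and boundedness of the multiplier sequences $\YYq^t,\ZZq^t$. Existence of a saddle point follows from Slater's condition, since any $\LLq=\XXq,\SSq=0$ supported on $\Omega$ yields a strictly feasible relative interior point. Combining these with the closed-form KKT characterizations $\mathbf{0}\in \partial_{\LLq}(\cdot)$ and $\mathbf{0}\in \partial_{\SSq}(\cdot)$ at any limit point shows the accumulation point $(\LLq^*,\SSq^*)$ satisfies the optimality conditions of \eqref{m:qtc2}, hence it is the optimal solution. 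The main obstacle I anticipate is rigorously justifying the subdifferential calculus for the quaternion nuclear norm summed across multiple unfoldings, which requires verifying that the subgradient of each $\|\Lq_{(j)}\|_*$ transforms correctly under $\texttt{fold}_j/\texttt{unfold}_j$ and that the quaternion singular value thresholding is the proximal operator of the quaternion nuclear norm; this is handled by the characterization in \cite{jia2018quaternion,jia2022quaternion} together with the linearity of $\texttt{fold}_j$.
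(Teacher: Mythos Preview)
Your proposal is correct and follows essentially the same approach as the paper: group the variables into the two blocks $(\SSq,\PPq)$ and $(\LLq,\QQq)$, verify convexity and closedness of the nuclear and $\ell_1$ terms by passing to the real representation of quaternion matrices, and then invoke the classical two-block ADMM convergence result of Boyd \textit{et al.}~\cite{boyd2011distributed}. The paper carries out the reduction to the real setting explicitly (vectorizing each quaternion matrix into a real vector in $\mathbb{R}^{4n_1\cdots n_k}$) and then simply cites \cite{boyd2011distributed}, whereas you spell out more of the underlying machinery (Slater's condition, KKT characterization, Lyapunov decrease); this extra detail is fine but not required, and the subdifferential concern you flag is exactly what the real-vector reformulation in the paper sidesteps.
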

    \begin{proof}
	
	Since all of the matrices mentioned in \eqref{lagrangian:qtc2} are quaternion
	matrices, we reformulate them to real forms. Taking $\Lq_{(j)} = L_0 +
	L_1\iq + L_2\jq+ L_3\kq \in \mathbb{Q}^{n_j\times \prod \limits_{i\neq j}n_i }$ as an example, we represent it
	with a real vector defined by
	$L_{c_{(j)}} = [{\rm vec}(L_0); {\rm vec}(L_1); {\rm vec}(L_2); {\rm vec}(L_3)] \in \mathbb{R}^{4n_1n_2\cdots n_k}
	,$
	where ${\rm vec}(L_i)$ denotes an ($n_1n_2\cdots n_k$)-dimensional vector generated by stacking the columns of $L_i$. Thus, the quaternion model
	\eqref{lagrangian:qtc2}  is mathematically equivalent to 
		\begin{equation} \label{lagrangian:rqtc2}
			\begin{array}{rl}
				\underset{L_c, S_c,P_c, Q_c}{\min}&
				\sum\limits_{j=1}^{k}\alpha_j\left\Vert  {L_{c_{(j)}}} \right \Vert_{\ast}+\lambda \left\Vert  {S_{c_{(1)}}} \right \Vert_{1}\\&+
				\sum\limits_{j=1}^{k}\frac{\beta_j}{2}\|L_{c_{(j)}}-P_{c_{(j)}}+Y_{c_{(j)}}/\beta_{j}\|_F^2\\
				&+ \frac{\mu}{2}\|S_{c_{(1)}}-Q_{c_{(1)}}+Z_{c_{(1)}}/\mu\|_F^2\\
				 {\rm s.t. } &
				\mathcal{P}_{\Omega}\left({P_c}+{Q_c}\right)={X_c}.
			\end{array}
		\end{equation}
	Problem \eqref{lagrangian:rqtc2} is a minimization problem about real variables. For clarification, we define the object function by
	\begin{align*}
			&F\left(\left[
			\begin{array}{c}  
			P_c \\
			S_c	
			\end{array}
		\right]\!\!,\!\!\left[
		\begin{array}{c}  
			L_c \\
			Q_c	
		\end{array}
		\right]\!\!,\!\!\left[
		\begin{array}{c}  
			Y_c \\
			Z_c	
		\end{array}
		\right]		
		\right)\!:=
			\sum\limits_{j=1}^{k}\alpha_j\left\Vert  {L_{c_{(j)}}} \right \Vert_{\ast}+\lambda \left\Vert  {S_{c_{(1)}}} \right \Vert_{1} +\\
			&\sum\limits_{j=1}^{k}\!\frac{\beta_j}{2}\!\|L_{c_{(j)}}\!\!-\!\! P_{c_{(j)}}+Y_{c_{(j)}}/\beta_{j}\|_F^2
			+ \frac{\mu}{2}\|S_{c_{(1)}}\!-\!Q_{c_{(1)}}+Z_{c_{(1)}}/\mu\|_F^2.
\end{align*}
From  \cite[Proposition 2]{jia2018quaternion},   the convex
envelope of the function $\phi(\Xq) = \rm rank(\Xq)$ on $S := \{\Xq \in \mathbb{Q}^{n_1\times n_2}
	|\|\Xq\| \le 1\}$ can be expressed as
$$\phi_{envo}(\Xq) = \|\Xq\|_{\ast}.$$
	Therefore, the nuclear function $\|\cdot\|_{\ast}$ is convex and closed.
	On the other hand, the $\ell_1$ function  $\|\cdot\|_1$ is obviously  convex and closed.  Under this
	circumstance, the optimization problem \eqref{lagrangian:rqtc2} fits the framework  of
	ADMM. We can iteratively update all variables as follows:
\begin{align*}
	\left[
	\begin{array}{c}  
		P_c^{(t+1)} \\
		S_c^{(t+1)}
	\end{array}\!
    \right] &= \underset{P_c, S_c}{\amin}~\! F\left(
    \left[
    \begin{array}{c}  
    	L_c^{(t)} \\
    	Q_c^{(t)}
    \end{array}\!\!
\right]\!,\!\left[
\begin{array}{c}  
	P_c \\
	S_c
\end{array}\!
\right]\!,\!\left[
\begin{array}{c}  
	Y_c^{(t)} \\
	Z_c^{(t)}
\end{array}\!\!
\right]\!
    \right),\\
    \left[
    \begin{array}{c}  
    	L_c^{(t+1)} \\
    	Q_c^{(t+1)}
    \end{array}\!
    \right] &= \underset{L_c, Q_c}{\amin} ~\! F\left(
    \left[
    \begin{array}{c}  
    	L_c \\
    	Q_c
    \end{array}\!\!
    \right]\!,\!\left[
    \begin{array}{c}  
    	P_c^{(t+1)} \\
    	S_c^{(t+1)}
    \end{array}\!\!
    \right]\!,\!\left[
    \begin{array}{c}  
    	Y_c^{(t)} \\
    	Z_c^{(t)}
    \end{array}\!\!
    \right]\!
    \right),\\
    \left[
    \begin{array}{c}  
    	Y_c^{(t+1)} \\
    	Z_c^{(t+1)}
    \end{array}\!\!
    \right] &=  
    \left[
    \begin{array}{c}  
    	Y_c^{(t)}+\mu(L_c^{(t+1)}-P_c^{(t+1)}) \\
    	Z_c^{(t)}+\mu(S_c^{(t+1)}-Q_c^{(t+1)})
    \end{array}\!\!
    \right].
\end{align*}
	Here, we denote
    $\left[\begin{array}{c}  
    	P_c\\
    	S_c
    \end{array}\right],\left[\begin{array}{c}  
    L_c\\
    Q_c
\end{array}\right]$ and $\left[\begin{array}{c}  
Y_c\\
Z_c
\end{array}\right]$
	by
	$x$,$y$ and $z$, respectively. Then the above equation is consistent with the following
	equations in \cite{boyd2011distributed}: 
	\begin{align*}
	x^{(t+1)} &= \underset{x}{\amin}~ F(x,z^{(t)},y^{(t)}),\\
	z^{(t+1)} &= \underset{x}{\amin}~ F(x^{(t+1)},z,y^{(t)}),\\
	y^{(t+1)} &= y^{(t)} +\mu(Ax^{(t+1)}+Bz^{(t+1)}-c).
	\end{align*}
As a result, this falls essentially in the two-block ADMM framework and the convergence is theoretically guaranteed according to \cite{boyd2011distributed}.
    \end{proof}
    
    \begin{remark} It is worth mentioning that this model performs better than the real tensor completion model, 
	because the intrinsic color structures are totally retained during the computation process for the quaternion tensor, while unfold process may completely obliterate the three channels of color pixel.
	\end{remark}

	\begin{remark}  A color image can be seen as a color video with only one frame and its representation is a quaternion matrix. That is, if $n_3=1$ then a 3-mode quaternion tensor $\XXq$ reduces to a quaternion  matrix $\Xq$. So the proposed RQTC method is surely a generalization of  the QMC method \cite{jia2018quaternion}.
The incoherence conditions and the assumption of low-rank and sparsity for robust quaternion tensor recovery problem surely cover those in \cite[Theorem 2]{jia2018quaternion} for robust quaternion matrix quaternion recovery problems. 
	\end{remark}
	
	\begin{remark}
	In model \eqref{m:qtc}, the  definitions of quaternion tensor nuclear norm and  $\ell_1$ norm in \eqref{snn} are
	inspired by \cite{liu2009}, in which  the SNN is established as the nuclear norm for real tensors. 
	\end{remark}

	In the above, we have presented a novel RQTC model with the  exact recovery theorem and a new ADMM-based algorithm with a convergence proof.  They are feasible and efficient to restore quaternion tensors from partial and/or corrupted entries under the condition that the assumption of Theorem \ref{t:exactRQTC} is satisfied.  However,  the low-rank condition sometimes does not hold in practical applications. For instance, quaternion tensor that represents  color video is  of high-rank when  color video contains high frequency information. So we need to improve our model and algorithm further.

%

	\subsection{LRL-RQTC model}
	Now we present an improved RQTC model by introducing a low-rank learning method.  
	For the convenience of description, we concentrate on  $3$-mode quaternion tensor  $\XXq \in \mathbb{Q}^{n_1\times n_2\times n_3}$ that represents color video and  use the engineer language instead of the mathematician language.

Since color video is often of large-scale,  we set a  window for searching low-rank information and denote the part of quaternion tensor in this window by adding a subscript $t$. That is, $\XXq_t$ denotes a small quaternion tensor of $\XXq$ in a fixed searching window.  
%

	 Suppose we set $p$ windows totally, in other words, we divide the large tensor into $p$ smaller ones. 
	 A $3$-mode quaternion tensor  is a stack of horizontal, lateral and frontal slices. We choose a  series of overlapping patches of $\XXq_{t}$ (the $t$th searching window of tensor $\XXq$) from three kinds of slices,  respectively, and classify them into $\ell_j$ classes $( j=1,2,3 )$. Then we vectorize each patch of the $s$th class ($1 \le s \le \ell_j $) and rearrange them into a quaternion matrix, denoted by $\mathscr{F}_s^j(\XXq_{t})$, 
	 In other words,  $\mathscr{F}_s^j(\XXq_{t})$
	 is a low-rank quaternion matrix  generated by the $s$th class of similar patches from the $j$th type of  slice. 
	Define the classification function of similar patches by 
	\begin{align}\label{e:h}
	\mathcal{F}_j(\XXq_{t})&=[\mathscr{F}_1^j(\XXq_{t}),~\mathscr{F}_2^j(\XXq_{t}),\cdots, \mathscr{F}_{\ell}^j(\XXq_{t})].
	\end{align}
	The function is invertible and the inverse  is defined by
	\begin{align}\label{e:f-1}
	\mathcal{F}_j^{-1}\left([\mathscr{F}_1^j(\XXq_{t}),~\mathscr{F}_2^j(\XXq_{t}),\cdots, \mathscr{F}_{\ell}^j(\XXq_{t})]\right)&=\XXq_{t}.
    \end{align}
	%
	%
	
	Now we introduce a learning strategy into RQTC model and present a new low-rank learning robust quaternion tensor completion model (LRL-RQTC): 
\begin{equation} \label{m:LRL-QTC}
	\begin{array}{rl}
		\underset{\LLq,\SSq}{\min}&
		\sum\limits_{j=1}^{3}\sum\limits_{t=1}^{p}\sum\limits_{s=1}^{\ell_j}
		(\alpha_j\left\Vert  {\mathscr{F}_{s}^j\left(\LLq_t\right)} \right \Vert_{\ast}+\lambda_s\|\mathscr{F}_{s}^j(\SSq_t)\|_1)\\
		{\rm s.t. } &
		\mathcal{P}_{\Omega}\left({\LLq}+{\SSq}\right)={\XXq}.
	\end{array}
\end{equation}
    where $\mathscr{F}_{s}^j$ is a mapping transformation \eqref{e:h}.
	Different from the prior NSS-QMC model that uses the distance function to search similar patches, here we introduce a new 2DQPCA-based classification function and propose a new LRL-RQTC model with adaptively low-rank learning. The model \eqref{m:LRL-QTC} is a minimization issue consisting of three subproblems and independent of each other. For the convenience of the narrative, we only present the operation on the frontal slice in the following part, i.e.
	\begin{equation} \label{m:LRL-QTC-f}
		\begin{array}{rl}
			\underset{\LLq,\SSq}{\min}&	
			\sum\limits_{t=1}^{p}\sum\limits_{s=1}^{\ell}\left\Vert  {\mathscr{F}_{s}\left(\LLq_t\right)} \right \Vert_{\ast}+\lambda_s\|\mathscr{F}_{s}(\SSq_t)\|_1\\
				{\rm s.t. } &
			\mathcal{P}_{\Omega}\left({\LLq}+{\SSq}\right)={\XXq}.
		\end{array}
	\end{equation}
	
	\begin{figure*}
	\begin{minipage}{1.2\linewidth}
			\includegraphics[height=8cm, width=18cm]{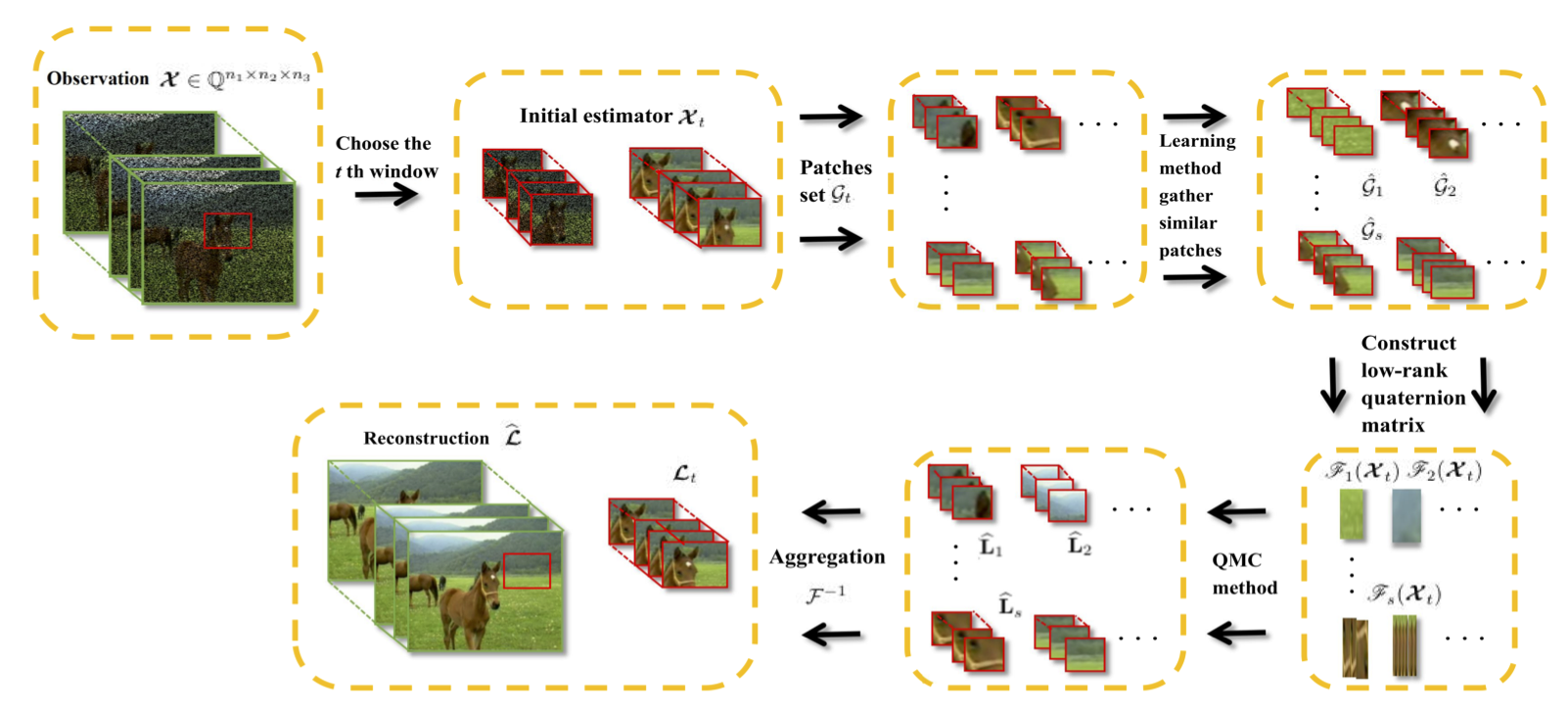}
	\end{minipage}
	\caption{Flowchart of the patched-based low-rank  learning method for RQTC problem.}\label{f:flowchart}
	\end{figure*}
	The flowchart of the patched-based learning method is shown
	in Fig. \ref{f:flowchart}. Firstly, choose the $t$th window $(1\le t\le p)$ and get $n$ overlapping patches $\Yq_{t}^{(i,j)}$ with size  $w\times h$  ($0< w \le n_1$, $0< h\le n_2$), covering each frontal slice of $\XXq_{t}$. Then we gather them into a set
	$\mathcal{G}_{t}$ 
	\begin{equation}\label{e:Gpatches} \mathcal{G}_{t}:=\{\Yq_{t}^{(i,j)}\in \mathbb{Q}^{ w \times h}\},
	\end{equation}
	where $(i,j)$ denotes the location of a patch.
	%
	Secondly, we set $\ell$ exemplar  patches which are non-overlapping. After choosing them, we calculate their eigen subspace $\Vq$. Then we find a number of  patches   most similar to the exemplar patches by  classification in group $\mathcal{G}_{t}$. The entire process is shown in Algorithm \ref{a:2dqpca}.
	For the $s$th exemplar patch,  similar patches are stored in $	\hat{\mathcal{G}}_{s} (s=1,\cdots,\ell)$, which is the subset of $\mathcal{G}_{t}$.
	By the 2DQPCA technology, we successfully achieve better performance in matching similar patches by learning low dimensional representation and forming small scale and quantity matrices to reduce CPU time.  According to the result of classification from 2DQPCA, the number of similar patches in $\hat{\mathcal{G}}_{s}$ is not fixed,  denoted by $|\hat{\mathcal{G}}_{s}|=d_s$ with $d_s$ being a  positive integer. 
	Finally,  we  stack the quaternion matrix from $\hat{\mathcal{G}}_{s}$ to the quaternion column vector and put them together lexicographically to construct a new quaternion matrix 
	\begin{equation}\label{e:fsx}
		\mathscr{F}_s(\XXq_t) = [{\tt vec}(\Yq_{t}^{(i_1,j_1)}),\cdots,{\tt vec}(\Yq_{t}^{(i_{d_s},j_{d_s})})] \in \mathbb{Q}^{(wh)\times d_s},
	\end{equation}
	where $\Yq_{t}^{(i_{k},j_{k})}$ denotes  the $k$-th element of $\hat{\mathcal{G}}_{s}$. 
	Thus, this 2DQPCA-based classification process learns $\ell$ small low-rank quaternion matrices stored in the set $\mathcal{F}(\XXq_t)$.
	
	Then, we repeat the above learning process on each window of horizontal, lateral and frontal slices of tensor $\XXq$ until the low-rank conditions are satisfied.

	{\linespread{1.1}
		\begin{algorithm}[h]
			\caption{2DQPCA-based classification function}
			\label{a:2dqpca} 
			
			\begin{algorithmic}[1]
				\State \textbf{Input:}
				\State \indent  The set $\mathcal{G}_{t}$ in \eqref{e:Gpatches}
				\State \textbf{Output:}
				\State \indent Low-rank set $\mathscr{F}_s(\XXq_t)$ in  \eqref{e:fsx}.
				
				\State \textbf{Main loop:}  
				\State \indent Compute the covariance matrix of $\ell$ exemplar patches from $\mathcal{G}_{t}$: $\Cq = \frac{1}{\ell-1}\sum\limits_{s=1}^{\ell}{\Phi_s^{\ast}}{\Phi_s}\ \in \mathbb{Q}^{h \times h}$
				where $\Phi_s=(\Yq_{t}^{(i_{s},j_{s})}-\Psi)$,
				$\Psi=\frac{1}{\ell}\sum\limits_{s=1}^{\ell}\Yq_{t}^{(i_{s},j_{s})}$, $s=1,\cdots,\ell$.
				\State \indent   Compute the eigenvalues of $\Cq$ and their eigenvectors, denote by $(\lambda_1,\vq_1),\cdots,(\lambda_h,\vq_h)$. Define projection subspace as $\Vq = \rm span \{\vq_1,\cdots,\vq_h\}$.
				\State \indent  
				Compute the projections of $\ell$ training patches, $$\Pq_s=\Phi_s\Vq \in \mathbb{Q}^{h \times h},\quad s=1,\cdots,\ell.$$
				\State \indent 
				For the rest samples in  $\mathcal{G}_{t}$, compute their feature matrix, $$\widehat{\Pq}_k=(\Yq_{t}^{(i_{k},j_{k})}-\Psi)\Vq,\quad k=1,  \cdots,n-\ell$$
				\State \indent 
				Solve the optimization problems
				$$y(k)={\rm arg}\min\limits_{1\le k\le n-\ell} \|\widehat{\Pq}_k- \Pq_s\|$$
				for $k=1,\cdots,n-\ell$.
				\State \indent 
				Find the same identity in $y$ and gather them in $\hat{\mathcal{G}}_{s}$ lexicographically.
				\State \indent
				Vectorized each element in $\hat{\mathcal{G}}_{s}$ to form $\mathscr{F}_s(\XXq_t)$ defined by \eqref{e:fsx}.
			\end{algorithmic}
	\end{algorithm}}

	Now we prove that the 2DQPCA-based classification function (Algorithm \ref{a:2dqpca}) generates a low $\delta$-rank matrix.  We refer to the definition of low $\delta$-rank \cite{jia2022quaternion}. 
	\begin{definition}\cite{jia2022quaternion}\label{d:lowrank}
		 A quaternion matrix $\Aq$ is called of $\delta$-rank $r$ if it has $r$ singular values bigger than $\delta > 0$.
	\end{definition}
We can see that the matrix is of low-rank when $r$ tends to zero. Based on the above definition, we give the following theorem to prove that $\mathscr{F}_s(\XXq_t)$  is a low $\delta$-rank matrix.
	\begin{theorem}\label{cor:deltarank}
		Suppose that each $\mathscr{F}_s(\XXq_t) \in \mathbb{Q}^{  (wh) \times d_s }$ generated by Algorithm \ref{a:2dqpca}  satisfies
		$\|\widehat{\Pq}_k- \Pq_s\|_F\le \frac{\sqrt{2}}{2}\delta $ and
		has  the singular value decomposition:
		$\mathscr{F}_s(\XXq_t)=\Uq\Sigma\Vq^*$,
		where
		$\Sigma=\texttt{diag}(\sigma_1,\cdots,\sigma_{d_s}),~\sigma_1\ge\cdots\ge\sigma_{d_s}\ge 0,\Vq =[\vq_1,\cdots,\vq_{d_s}]$.
		Let $r~(\le {d_s})$ be the least positive integer such that
		\begin{equation}\label{e:rij}
			\sum\limits_{k=1}^{{r}-1}(\sigma_{k}^2-\sigma_{r}^2)|\wq_{k}|^2\ge \sum\limits_{k=r+1}^d (\sigma_{r}^2-\sigma_k^2)|\wq_k|^2,
		\end{equation}
		where $[\wq_1,\cdots,\wq_d]^T=\vq_i-\vq_j$.
		Then $\sigma_{r}\le \delta$ and thus the $\delta$-rank of $\mathscr{F}_s(\XXq_t)$ is less than  $r$.
	\end{theorem}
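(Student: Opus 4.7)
The plan is to chain three ingredients. The clustering hypothesis bounds the Frobenius distance between any two patches of the same class $\hat{\mathcal{G}}_s$; the SVD of $\mathscr{F}_s(\XXq_t)$ converts this into a bound on the weighted sum $\sum_k\sigma_k^2|\wq_k|^2$; finally, the rearrangement of \eqref{e:rij} lower-bounds that weighted sum by $\sigma_r^2\sum_k|\wq_k|^2$, and unitarity fixes the normalizing constant, so that $\sigma_r\le\delta$ falls out.

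First I would verify the patch-distance bound. The projection basis constructed in Algorithm \ref{a:2dqpca} consists of orthonormal eigenvectors of the Hermitian covariance $\Cq$, so right-multiplication by it is a Frobenius-norm isometry; hence $\|\widehat{\Pq}_k-\Pq_s\|_F\le\tfrac{\sqrt{2}}{2}\delta$ transfers back to $\|\Yq_{t}^{(i_{k},j_{k})}-\Yq_{t}^{(i_{s},j_{s})}\|_F\le\tfrac{\sqrt{2}}{2}\delta$ for every patch of $\hat{\mathcal{G}}_s$. A triangle inequality across the exemplar then gives $\|\Yq_{t}^{(i_{k},j_{k})}-\Yq_{t}^{(i_{l},j_{l})}\|_F\le\sqrt{2}\delta$ for any two class members, and vectorization (itself a Frobenius-norm isometry) promotes this to $\|\vq_i-\vq_j\|_2\le\sqrt{2}\delta$ for any two columns $\vq_i,\vq_j$ of $\mathscr{F}_s(\XXq_t)$.

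Next I would invoke the SVD. Writing $\wq:=\Vq^*(e_i-e_j)=[\wq_1,\cdots,\wq_{d_s}]^T$ identifies the coefficients appearing in \eqref{e:rij}, and the unitarity of $\Uq,\Vq$ gives
\[
\sum_{k=1}^{d_s}\sigma_k^2|\wq_k|^2 \;=\; \|\Uq\Sigma\wq\|_2^2 \;=\; \|\vq_i-\vq_j\|_2^2 \;\le\; 2\delta^2, \qquad \sum_{k=1}^{d_s}|\wq_k|^2 \;=\; \|e_i-e_j\|_2^2 \;=\; 2.
\]
Transposing the $\sigma_r^2$ terms in \eqref{e:rij} and adding the trivial contribution $\sigma_r^2|\wq_r|^2$ to the left rewrites the hypothesis as
\[
\sum_{k=1}^{d_s}\sigma_k^2|\wq_k|^2 \;\ge\; \sigma_r^2\sum_{k=1}^{d_s}|\wq_k|^2.
\]
Combining the three relations yields $2\sigma_r^2\le 2\delta^2$, hence $\sigma_r\le\delta$, and the $\delta$-rank bound follows from Definition \ref{d:lowrank}.

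The main obstacle I expect lies in the overloaded notation and the isometry claim: the symbol $\Vq$ denotes both the 2DQPCA basis of Algorithm \ref{a:2dqpca} and the right-singular-vector factor of $\mathscr{F}_s(\XXq_t)$, and the bold letter $\vq$ is reused for columns of $\mathscr{F}_s(\XXq_t)$, for right singular vectors, and for 2DQPCA eigenvectors; carrying the projected-space bound back to the patch-space bound while keeping these roles separate is where most of the care is needed, and it is essential here that the 2DQPCA basis is genuinely unitary (square, with orthonormal columns) rather than a rectangular dimensionality reduction. Once this is sorted, the remaining computation is a one-line rearrangement plus the SVD identity.
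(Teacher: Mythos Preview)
Your proposal is correct and follows the paper's approach: both establish the column-distance bound $\|\xq_i-\xq_j\|_2\le\sqrt{2}\,\delta$ via the Frobenius isometry of the (square, unitary) 2DQPCA basis together with a triangle inequality through the exemplar patch, and your caveats about the overloaded symbols $\Vq$ and $\vq$ are on point. The only cosmetic difference is that the paper then invokes \cite[Theorem~3.1]{jia2022quaternion} for the SVD rearrangement that you carry out explicitly.
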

	\begin{proof} Refer to \cite[ Theorem 3.1]{jia2022quaternion}, it is sufficient to demonstrate that $\|\xq_i-\xq_j\|_2\le \sqrt{2} \delta$, where $\xq_i,\xq_j$ are any two columns of $\mathscr{F}_s(\XXq_t)$.
		  
		  From \eqref{e:fsx}, we choose two columns of $\mathscr{F}_s(\XXq_t)$: ${\tt vec}(\Yq_{t}^{(i_1,j_1)})$ and ${\tt vec}(\Yq_{t}^{(i_2,j_2)})$. Then,
		$\|\xq_i-\xq_j\|_2=\|{\tt vec}(\Yq_{t}^{(i_1,j_1)})-{\tt vec}(\Yq_{t}^{(i_2,j_2)})\|_2 = \|\Yq_{t}^{(i_1,j_1)}-\Yq_{t}^{(i_2,j_2)}\|_F = \|\Yq_{t}^{(i_1,j_1)}-\Yq_{t}^{(i_s,j_s)}+\Yq_{t}^{(i_s,j_s)}-\Yq_{t}^{(i_2,j_2)}\|_F = \|(\Yq_{t}^{(i_1,j_1)}-\Psi)\Vq-(\Yq_{t}^{(i_s,j_s)}-\Psi)\Vq+(\Yq_{t}^{(i_s,j_s)}-\Psi)\Vq-(\Yq_{t}^{(i_2,j_2)}-\Psi)\Vq\|_F =  \|\widehat{\Pq}_{1}-\Pq_{s}+\Pq_{s}-\widehat{\Pq}_{2}\|_F \le \|\widehat{\Pq}_{1}-\Pq_{s}\|_F + \|\Pq_{s}-\widehat{\Pq}_{2}\|_F \le \sqrt{2}\delta $.
	\end{proof}
	\begin{remark}
	Theorem \ref{cor:deltarank} describes the matrix generated by 2DQPCA-based classification function has the $\delta$-rank less than r. Actually, according to the definition of approximately low-rank matrix \cite{errorchen2022}:
	$$\sum_{i=1}^{d_i}|\sigma_i|^q\le \rho, 0 \le q <1,
	$$
	where $\rho$ is a numerical value, $\sigma_i$ is the singular values of matrix and the matrix reduces to low-rank when $q=0$. 
	
    If we get singular values $\sigma_i$ of $\mathscr{F}_s(\XXq_t)$, after a simple calculation, 
    $\rho \ge \sum_{i=1}^{r}|\sigma_i|^q \ge \delta^qr$, then $r\le \rho \delta^{-q}$. 
	\end{remark}

	%


	Next, we propose a new LRL-RQTC algorithm based on learning scheme for solving \eqref{m:LRL-QTC}. Based on ADMM framework, by introducing two quaternion tensors  $\PPq$ and $\QQq$, we minimize the following equivalent problem:
	\begin{equation} \label{m:qtcqmc}
		\begin{array}{rl}
			\underset{\LLq,\SSq,\PPq,\QQq,}{\min}&
			\sum\limits_{t=1}^{p}\sum\limits_{s=1}^{\ell}\left\Vert  {\mathscr{F}_{s}\left(\LLq_t\right)} \right \Vert_{\ast}+\lambda_s\|\mathscr{F}_{s}(\SSq_t)\|_1\\
			{\rm s.t. } &
			\mathcal{P}_{\Omega}\left({\LLq}+{\SSq}\right)={\XXq},
			\PPq=\LLq,\QQq=\SSq.
		\end{array}
	\end{equation}
	Actually, the solving method of the problem \eqref{m:qtcqmc} converts to QMC method. Then we apply the QMC algorithm \eqref{a:admmqmc} to  solve \eqref{m:qtcqmc}, which can be broken down into $p\ell$ independent subprocesses and thus, they can be performed in parallel.  Under the assumption of low-rank condition, we can get the low-rank  reconstruction  $\mathscr{F}_{s}\left(\hat{\LLq}_t\right)$ and the sparse composition $\mathscr{F}_{s}\left(\hat{\SSq}_t\right)$.  Then we can get a good approximation of a quaternion tensor. 
	
	Based on the above analysis, we summarize the proposed LRL-RQTC algorithm and present the  pseudo-code in  Algorithm \ref{a:pqmc}.

	{\linespread{1.1}
		\begin{algorithm}[h]
			\caption{LRL-RQTC  Algorithm}
			\label{a:pqmc} 
			\begin{algorithmic}[1]
				\State \textbf{Input:}
				\State \indent  The observed quaternion tensor $\XXq \in\mathbb{Q}^{n_1\times   n_2\times n_3}$;
				\State \indent Known and unknown entries sets: $\Omega$ and $\overline{\Omega}$;     
				\State \indent Fix the size  $w\times h$ of patch; 
				\State \indent Determine $p$ windows  and $\ell$ exemplar patches ;
				\State \indent
				A tolerance  $\delta>0$;  constants $\lambda_i> 0$; weights $\alpha_j$ ;
				\State \textbf{Output:}
				\State \indent A   reconstruction    $\widehat{\LLq}$;
				\State \indent A sparse component  $\widehat{\SSq}$.
				\State \textbf{Main loop:}  
				\For {t = 1:p}
				\State \indent Create the set $\mathcal{G}_{t}$ as in \eqref{e:Gpatches}. \State \indent Choose $\ell$  exemplar patches  $\Yq_{t}^{(i_s,j_s)}, s=1,\cdots,\ell$ with $(i_s,j_s)\in\Omega$.
				\For {s = 1:$\ell$}
				\State \indent  Apply the 2DQPCA-based classification function (Algorithm \ref{a:2dqpca}) to learn some similar patches of $\Yq_{t}^{(i_s,j_s)}$ and generate them as a  low-rank quaternion matrix   $\mathscr{F}_s(\XXq_{t})$ as in \eqref{e:fsx}.
				\State \indent  
				Find a  low-rank approximation $\mathscr{F}_{s}\left(\hat{\LLq}_t\right)$  and a sparse component $\mathscr{F}_{s}\left(\hat{\SSq}_t\right) $ of $\mathscr{F}_s(\XXq_{t})$ by the QMC  iteration \eqref{a:admmqmc} .
				\EndFor
				\State \indent 
				Use $\mathcal{F}^{-1}$ function on  $[\mathscr{F}_{1}\left(\hat{\LLq}_t\right),\cdots, \mathscr{F}_{\ell}\left(\hat{\LLq}_t\right)]$  and
				$[\mathscr{F}_{1}\left(\hat{\SSq}_t\right),,\cdots, \mathscr{F}_{\ell}\left(\hat{\SSq}_t\right)]$  to get  $\hat{\LLq}_{t}$ and $\hat{\SSq}_{t}$.
				\EndFor		
			\end{algorithmic}
	\end{algorithm}}

	 Our  LRL-RQTC method  successfully recovers the color video with missing entries and/or noise and achieves a better performance in both visual and numerical comparison.
		 The algorithm uses  '$\texttt{approxQ}$' function \eqref{e:approxq} which is required to calculate all singular triplets of a quaternion matrix involving heavy computing cost at each iteration. We need to further increase the speed of computing. Moreover, in the practical, we can calculate the partial singular triplets and set a threshold value  to reduce time. To overcome these difficulties, we  build superior SVD solvers for the RQTC problem. 
	 We design  an implicit restarted multi-symplectic block Lanczos bidiagonalization  acceleration algorithm for quaternion SVD computation.  We will show the details of the process in the supplementary material.

	\section{Numerical Examples}\label{sec:ex}
	We will carry out various experiments in this part to demonstrate the usefulness of our low-rank algorithms for robust quaternion tensor completion (RQTC and LRL-RQTC). Below we conduct experiments on color images and videos from datasets, respectively. The level of the noise is denoted
	by $\gamma = a/(n_1n_2n_3) $, where the size of each object is $n_1 \times n_2 \times n_3$ and $a$ pixels are corrupted with uniform distribution noise.
	The level of missing entries is denoted by $(1-\rho) = 1-|\Omega|/(n_1n_2n_3)$, where $\Omega$  is the set of pixels we can observe and is also randomly selected. 
	All computations were carried out in MATLAB version R2020a on a  computer with Intel(R) Xeon(R) CPU E5-2630 @ 2.40Ghz processor and 32 GB memory.

	\begin{example}
		(The Effect of 2DQPCA Technology in Color Image Inpainting)
	\end{example}
	In this example, we compare the recoverable performance on color images with LRL-RQTC and NSS-QMC \cite{jia2022quaternion} methods. Peak signal-to-noise ratio (PSNR) and structural similarity index measure (SSIM) \cite{wbss2004} are used to assess the  quality. The levels of missing entries and  noise are considered as follows: $(1-
	\rho, \gamma) = (50\%, 10\%),(50\%, 20\%), (0\%, 10\%)$.
	The tolerance  $\delta =10^{-4}$, the patch size of   two methods is both $16\times 16$ and the maximum number of iterations is $500$.
	
	Numerical results are displayed in Table \ref{Ex2_pca_nss} and restored images are shown in Fig \ref{fig:nssqmcVSllrpqmc}. In bold, the best values are highlighted in the table. We can find  that LRL-RQTC gets higher values both in PSNR and SSIM in Table \ref{Ex2_pca_nss}. That means the inpainting effect of LRL-RQTC is better than NSS-QMC from a numerical point of view. Let us see the details in Fig \ref{fig:nssqmcVSllrpqmc_cut}. On the first line, we can find  that the texture of the pepper is clearer in the (d)th column than the (c)th. In other words, the image restored by LRL-RQTC is better than NSS-QMC.
	On the second line, it is obvious that the edge of the woman's face restored by LRL-RQTC is more stereoscopic and the five sense organs are  clearer than the images restored by NSS-QMC. Besides, the texture information is also well preserved, such as the bottom left of the scarf.

	\begin{table}[!h]
		\tabcolsep 0pt \caption{RESULTS WITH RESPECT TO PSNR AND SSIM ON  IMAGES}\label{Ex2_pca_nss} \vspace*{-12pt}
		\begin{center}
			\def\temptablewidth{0.5\textwidth}
			{\rule{\temptablewidth}{0.5pt}}
			\begin{tabular*}{\temptablewidth}{@{\extracolsep{\fill}}llccc}
				\multirow{1}*{\quad \quad Images}& $(1-\rho,\gamma)$&\multirow{1}*{Methods} &  PSNR& SSIM \\ 
				\hline

				\multirow{4}{*}{
					$ \begin{array}{c}{\rm Pepper}\\
						(512\times 512)\\
					\end{array}$
				}      &          \multirow{2}{*}{$(50\%,10\%)$ }
				& NSS-QMC &  33.0979     &0.9139   \\
				& & LRL-RQTC &\bf{33.2769}
				&\bf{ 0.9174}\\ 
				&
				\multirow{2}{*}{$(0\%,10\%)$}
				& NSS-QMC & 35.0680   
				&0.9442 \\
				& & LRL-RQTC &\bf{35.3002}
				& \bf{0.9507}\\

				\hline
				
				\multirow{4}{*}{
					$ \begin{array}{c}{\rm Barbara}\\
						(256\times 256)\\
					\end{array}$
				}      &          \multirow{2}{*}{$(50\%,20\%)$ }
				& NSS-QMC & 29.3625      &0.8941 \\
				& & LRL-RQTC &\bf{29.6978}
				&\bf{0.8991}\\ 
				&
				\multirow{2}{*}{$(0\%,10\%)$}
				& NSS-QMC & 32.1652     &0.9457\\
				& & LRL-RQTC &\bf{33.0635}
				&\bf{0.9577}\\

			\end{tabular*}
			{\rule{\temptablewidth}{0.5pt}}
		\end{center}
	\end{table}

	\begin{figure}
		\centering
		\includegraphics[width=0.45\textwidth,height=0.55\textwidth]{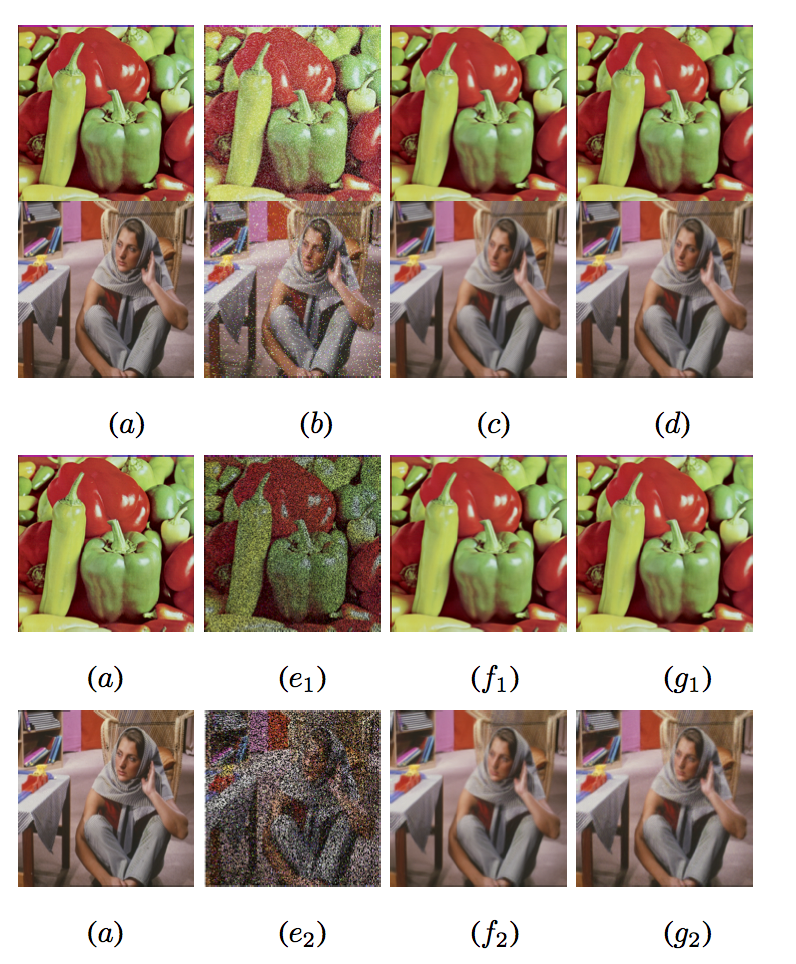}
		\caption{Inpainting results with different level:($a$) original, ($b$) ($1-\rho,\gamma$) = $(0,10\%)$, ($e_1$) ($1-\rho,\gamma$) = $(50\%,10\%)$, ($e_2$) ($1-\rho,\gamma$) = $(50\%,20\%)$, ($c$,$f_1$,$f_2$)  NSS-QMC, ($d$,$g_1$,$g_2$) LRL-RQTC}.
		\label{fig:nssqmcVSllrpqmc}
	\end{figure}

	\begin{figure}
		\centering
				\includegraphics[width=0.45\textwidth,height=0.30\textwidth]{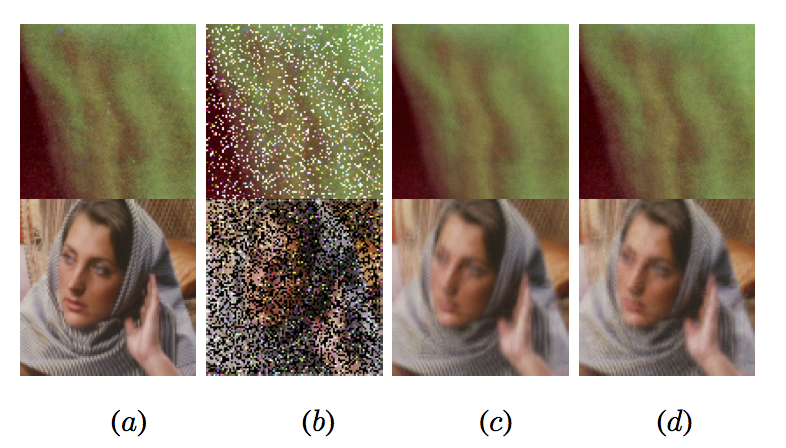}
		\caption{Enlarged parts of the first and forth row in Fig \ref{fig:nssqmcVSllrpqmc}: (a) the original,
			(b) the observations, (c) NSS-QMC, (d) LRL-RQTC. }.
		\label{fig:nssqmcVSllrpqmc_cut}
	\end{figure}

	\begin{example}
		(Robust Color Video Recovery)
	\end{example}
	In this example, we  compare the proposed RQTC and LRL-RQTC methods with other modern techniques in order to demonstrate the effectiveness and superiority of our methods in robust color video recovery. 
	We choose the color video ‘DO01$\_$013’ in ‘videoSegmentationData’ database
	 \cite{fmkty09}  of size $243 \times 256$ and use a 3-mode quaternion tensor to represent it. We also compare the performance of models under different missing entries and noise levels. 
	
	We set $(1-\rho, \gamma) = (10\%, 10\%),(20\%, 10\%)$.
	The stopping criteria  is $\delta =10^{-4}$ and  suitable parameters are chosen to obtain the best results of each model. We compare RQTC and LRL-RQTC  with  the other six robust quaternion tensor completion methods.
	\begin{itemize}
		\item  t-SVD (Fourier) \cite{tsvdfourier2019}: Tensor SVD using Fourier transform.
		\item  TNN \cite{zatnn2017}: Recover  3-D arrays with a low tubal-rank tensor.
		\item OITNN-O \cite{wzj2022}: Recover tensors by  new tensor norms: Overlapped Orientation Invariant Tubal.
		\item OITNN-L \cite{wzj2022}: Recover tensors by  new tensor norms: Latent Orientation Invariant Tubal Nuclear Norm.
		\item QMC \cite{jia2018quaternion}: Consider each frame of color video as a color image.
		\item TNSS-QMC  \cite{jia2022quaternion}: Use distance function to find similar patches.
		
	\end{itemize}
	For t-SVD (Fourier), TNN, OITNN-O, OITNN-L, QMC, and TNSS-QMC, we strictly follow the parameters set in the article.
	For LRL-RQTC, we fix each window of size $27\times 32$ and only act on the frontal slice, i.e. $ [\alpha_1,\alpha_2,\alpha_3] =
	[0,0,1]$. The maximum number of iterations is $100$, each patch size is $11 \times 11$.
	For  RQTC, the weighted vector $\alpha = [\alpha_1,\alpha_2,\alpha_3] =[0.8, 0.1,0.1]$.
	
	The detailed comparisons of the ‘DO01$\_$013’ dataset are given in Table \ref{ex3:video}. We  display the comparative results on  ten frames randomly choosen from color video of two situations. From 
	the last row of Table  \ref{ex3:video}, 
	with respect to PSNR and SSIM, the suggested LRL-RQTC  outperforms all other competitors on the average recovery results.
	 Our LRL-RQTC exceeds
	the second-best  by $0.7$ dB PSNR value on average. In particular, from the sub-table (a), the PSNR value of the restored frames of video can be increased by nearly 1dB under $10\%$ noise and missing entries level. From the sub-table (b), although on individual frames our LRL-RQTC  is not as good as OITNN-L, the average result of our method is higher and the result of OITNN-L is not stable. Furthermore, different from patch-based methods such as TNSS-QMC and LRL-RQTC, our proposed   RQTC recovers color video on the whole. Despite not reaching the highest PSNR and SSIM values,  RQTC  outperforms tsvd (Fourier), TNN, OITNN-O, and QMC  which  also processes the entire video without patching. It is worth noting that  RQTC outperforms QMC  by $3$ dB  PSNR value and are only slightly less than the other patch's method. The  RQTC can also be handled very well in small details and the global approach is much faster than the patch-based approach, so we believe that there is much potential to improve  RQTC.
	
	In Fig. \ref{fig:videohorsenan01nois01} and Fig. \ref{fig:videohorsenan02nois01}, we present the visual results on one frame under two conditions, where it can be seen that recovered video frames generated through the proposed  RQTC and LRL-RQTC contain more details  and more closely approximate the color of the initial frames.
	For instance, we  observe the detailed features of the restored grass carefully. From the enlarged part (2nd row) of Fig. \ref{fig:videohorsenan01nois01},  color of shades and density of the grass in (c)th (h)th and (j)th columns have a
	clearer structure and look more realistic but the horse of (c)th column  are not restored clearer. This means this video restored by our LRL-RQTC and RQTC are better. Although the PSNR and SSIM values of RQTC  are not higher than TNSS-QMC, the recovery of certain details is superior to it. It shows that both RQTC and LRL-RQTC outperform in most cases. 
	From the enlarged part (2nd row) of Fig. \ref{fig:videohorsenan02nois01}, the white point of the horse's head is restored better by LRL-RQTC, it looks  brighter and completer. The proposed LRL-RQTC  uses the idea of clustering and finding similar patches adaptively by learning technology. Patches by learning technology can form low-rank matrices and make sure the effect of recovery is better. In particular, it is not necessary to calculate all singular values during the process.
	As a result, with the proper amount of patches, the suggested LRL-RQTC will be substantially more efficient.

	\begin{table*}
		\caption{
			PSNR AND SSIM VALUES BY DIFFERENT METHODS ON THE COLOR VIDEO.}
		\begin{center}
		 (1) $(1-
			\rho, \gamma) = (10\%, 10\%)$
		\end{center}
		\label{ex3:video}
		\begin{center}
			\tabcolsep=0.15cm
			\renewcommand{\arraystretch}{1.2} \vskip-3mm {\fontsize{8pt}{\baselineskip}%
				\selectfont
				\scalebox{0.99}{
					\begin{tabular}{|c|cc|cc|cc|cc|cc|cc|cc|cc|}
						\hline
						Number of        & \multicolumn{2}{c|}{t-SVD (Fourier)}
						&\multicolumn{2}{c|}{TNN} 
						&\multicolumn{2}{c|}{OITNN-O}
						&\multicolumn{2}{c|}{OITNN-L}
						&\multicolumn{2}{c|}{QMC}
						&\multicolumn{2}{c|}{ RQTC }
						&\multicolumn{2}{c|}{TNSS-QMC } &\multicolumn{2}{c|}{LRL-RQTC} 
				         \\
						frames      &    PSNR&SSIM &PSNR&SSIM &PSNR&SSIM&   PSNR&SSIM &PSNR&SSIM &	PSNR&SSIM     &  PSNR&SSIM &PSNR&SSIM \\ \hline\hline
						1
						&  32.76 & 0.9039 & 29.87 & 0.8938 &30.33 & 0.8966 &  38.33 & 0.9764 
						&  33.51 & 0.9303   
						&  {36.83}  & {0.9525}&
						\underline{38.79} & \underline{0.9709}   &   {\bf 40.22}& {\bf  0.9785} \\ \hline
						
						2
						& 31.71 & 0.8888 &  30.50 &0.9105 &  33.95 & 0.9500 &  \underline{39.73} &\underline{0.9799} &  33.63 & 0.9336   &   {37.19}  & {0.9544} &
						{39.56} & 0.9759 &   {\bf 40.94}  &  {\bf 0.9814} \\ \hline
					
						3
						&  32.08 & 0.8957 &  31.09 & 0.9245 &  35.59 & 0.9628 &  39.30 & \underline{0.9780} &  33.84 & 0.9354   &   {37.36}  & {0.9553} &
						\underline{40.09} & {0.9776}   &   {\bf 41.35} &  {\bf 0.9823} \\ \hline
						
						4
						& 32.57 & 0.8982 &  31.02 & 0.9254 &  35.99 & 0.9688 &  39.09 & \underline{0.9793} &33.83 & 0.9361  &  {37.32}  & {0.9560} &
						\underline{39.75} & {0.9769}   &   {\bf 40.92}  &  {\bf 0.9810} \\ \hline
						
						5
						& 32.59 & 0.9025 & 30.43 & 0.9167 &  35.64 & 0.9692 &  38.30 & \underline{0.9778} &  33.86 & 0.9359   &   {36.35} &  {0.9524} &
						\underline{39.32} & {0.9745}   &   {\bf 39.84}  &  {\bf 0.9771} \\ \hline
						6
						& 32.64 & 0.9014 &   29.72 & 0.9045 &  35.22 & 0.9668 &  37.64 & 0.9763 &  33.76 & 0.9361   &  {35.36}  & {0.9495}&
						\underline{37.59} & \underline{0.9670}   &   {\bf 37.97}& {\bf  0.9716}  \\ \hline
							7
						& 31.60 & 0.8895 &  29.46 & 0.8899 &  34.58 & 0.9622 &  37.41 & \underline{0.9765} &  33.85 & 0.9345   &  {35.64}  & { 0.9527}&
						\underline{38.32} & {0.9715}   &   {\bf 39.17}& {\bf  0.9749}  \\ \hline
						8
						& 32.32 & 0.8967 &  29.73 & 0.8907 &  33.72 & 0.9572 &  36.29 & 0.9732  &  34.16 & 0.9367   &  {35.79}  & {0.9539}&
						\underline{39.15} & \underline{0.9751}   &   {\bf 39.93}& {\bf  0.9778}  \\ \hline
						9
						& 32.54 & 0.8997 &  29.41 & 0.8870 & 32.75 & 0.9477 &  36.03 & 0.9725 &  34.09 & 0.9357   &  {35.87}  & {0.9535}&
						\underline{39.40} & \underline{0.9758}   &   {\bf 39.99}& {\bf  0.9783} \\ \hline
						10
						& 32.58 & 0.9029 &  28.37 & 0.8679 & 28.56 & 0.8733 &  34.69 & 0.9683 &  34.18 & 0.9362   &  {35.10}  & {0.9506}&
						\underline{38.61} & \underline{0.9733}   &   {\bf 39.12}& {\bf  0.9757}  \\ \hline
						average 
						& 32.34 & 0.8979  & 29.96 & 0.9011 &  33.63 & 0.9455 &  37.68 & \underline{0.9758}  &  33.8 & 0.9351 &  {36.28}  & {0.9531}&
						\underline{39.06} & {0.9739}   &   {\bf 39.95}& {\bf  0.9778}  \\ \hline
				\end{tabular}}
			} 
		\end{center}
		
		\begin{center}
			(2) $(1-
			\rho, \gamma) = (20\%, 10\%)$
		\end{center}
		\label{tab:video}
		\begin{center}
			\tabcolsep=0.15cm
			\renewcommand{\arraystretch}{1.2} \vskip-3mm {\fontsize{8pt}{\baselineskip}%
				\selectfont
				\scalebox{0.99}{
				\begin{tabular}{|c|cc|cc|cc|cc|cc|cc|cc|cc|}
					\hline
					Number of        & \multicolumn{2}{c|}{t-SVD (Fourier)}
					&\multicolumn{2}{c|}{TNN} 
					&\multicolumn{2}{c|}{OITNN-O}
					&\multicolumn{2}{c|}{OITNN-L}
					&\multicolumn{2}{c|}{QMC}
					&\multicolumn{2}{c|}{ RQTC }
					&\multicolumn{2}{c|}{TNSS-QMC } &\multicolumn{2}{c|}{LRL-RQTC} 
					\\
					frames      &    PSNR&SSIM &PSNR&SSIM &PSNR&SSIM&   PSNR&SSIM &PSNR&SSIM &	PSNR&SSIM     &  PSNR&SSIM &PSNR&SSIM \\ \hline\hline
						1
						& 31.96 &0.8927 &  29.79 & 0.8920&  30.32 & 0.8962&  37.32 & \underline{0.9662} &  32.94 & 0.9165   &  {35.64}  & {0.9369}&
						\underline{37.77} & {0.9657}   &   {\bf 38.32}& {\bf  0.9698}  \\ \hline	
						
						2
						& 31.13 & 0.8788 &  30.42 & 0.9091&   33.97 & 0.9498 & {\bf 39.84} & {\bf 0.9801} &  32.99 & 0.9202   &   {36.05}  & {0.9400} &
						{38.75} & {0.9729}   &   \underline{39.40}  &  \underline{0.9761} \\ \hline
						
						3
					    & 31.41 & 0.8834 & 31.03 & 0.9237 &  35.63 & 0.9627 &  \underline{39.41} & {\bf  0.9881} 	&  33.10 & 0.9211   &   {36.15}  & {0.9406} &
						{39.05} & {0.9737}   &   {\bf 39.82} &  \underline{0.9776} \\ \hline
						
						4
						& 31.98 & 0.8912 &  30.98 & 0.9249&  36.00 & 0.9686 &  \underline{39.12} & {\bf 0.9794}  &32.95 & 0.9233  &  {36.05}  & {0.9415} &
						{38.31} & {0.9724}   &   {\bf 39.18}  &  \underline{ 0.9759} \\ \hline
					    5
						& 31.99 & 0.8907 &  30.41 & 0.9166&  35.65 & 0.9693 &  {\bf 38.32} & {\bf 0.9780} &  32.83 & 0.9223   &   {35.41} &  {0.9388} &
						{37.09} & {0.9666}   &   \underline{ 37.45}  &  \underline{0.9692} \\ \hline
						6 
						& 31.97  &0.8895&  29.74 & 0.9052&  35.25 & 0.9165 &  36.69 & {\bf 0.9666} &  33.00 & 0.9215   &  {34.45}  & {0.9393}
						& \underline{37.12} & {0.9618}   &   {\bf 37.30}& \underline{0.9655}  \\ \hline
						7
						& 31.04 & 0.8743 &  29.47 & 0.8901 &  34.57 & 0.9619 &  37.37 & 0.9663 &  32.99 & 0.9201   &  {34.74}  & {0.9425}
						& \underline{37.71} & \underline{0.9652}   &   {\bf 38.18}& {\bf  0.9681}   \\ \hline
						8
						& 31.53 & 0.8854 &  29.69 & 0.8901&  33.68 & 0.9567  &  36.26 & 0.9630 &  33.08 & 0.9203   &  {34.92}  & {0.9431}
						& \underline{38.24} & \underline{0.9683}   &   {\bf 38.74}& {\bf  0.9707}   \\ \hline
						9
						& 31.82 & 0.8905 & 29.37 & 0.8861 & 32.72 &  0.9476 &  35.99 & 0.9622  &  32.82 & 0.9190   &  {35.89}  & {0.9429}
						& \underline{38.50}& \underline{0.9694}   &   {\bf 38.89}& {\bf  0.9715}  \\ \hline
						
						10
						& 32.15 & 0.8947 &  28.33 & 0.8672&  28.54 & 0.8733 &  34.69 & 0.9586  &  32.86 & 0.9204   &  {34.30}  & {0.9401}
						& \underline{37.79} & \underline{0.9665}   &   {\bf 37.98}& {\bf  0.9680}  \\ \hline
						average 
						& 31.70 & 0.8871& 29.92 & 0.9005 & 33.62 & 0.9402& 37.50&\underline{0.9708}&32.96& 0.9204&35.36&0.9406&\underline{38.03}&0.9683&{\bf 38.53}&{\bf 0.9712} \\ \hline
				\end{tabular}}
			}
		\end{center}

	\end{table*}
	
	\begin{figure*}
	\centering
			\includegraphics[width=0.96\textwidth,height=0.3\textwidth]{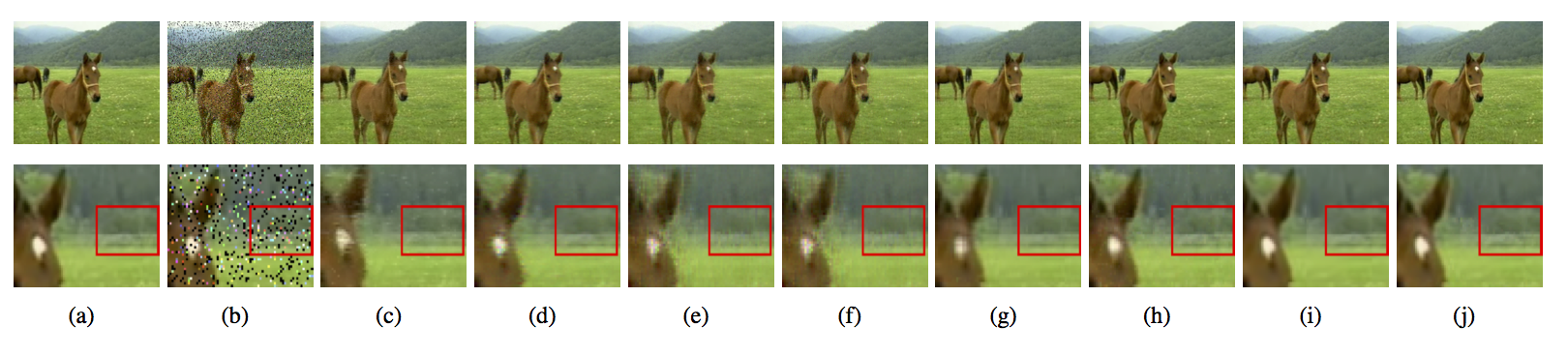}
	
	
	\caption{ One frame (1st row) and their enlarged parts (2nd row) by eight methods with 10\% unobserved and 10\% corrupted entries. (a) the original. (b)
		the observations. (c) t-SVD (Fourier). (d) TNN. (e) OITNN-O. (f) OITNN-L. (g) QMC. (h)  RQTC. (i) TNSS-QMC. (j) LRL-RQTC.}
	\label{fig:videohorsenan01nois01}
\end{figure*}
	
\begin{figure*}
		\centering
	\includegraphics[width=0.96\textwidth,height=0.3\textwidth]{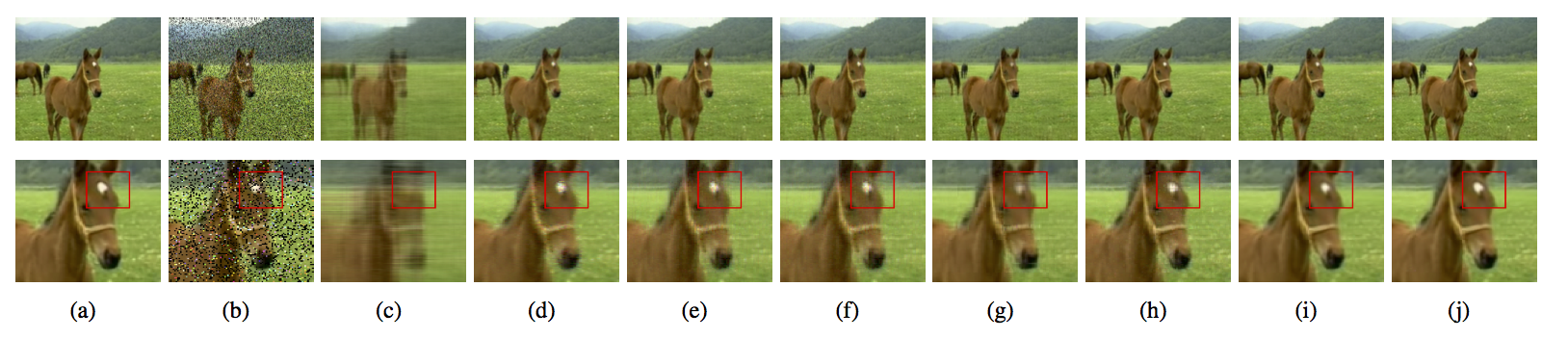}
		
		
		\caption{ One frame (1st row) and their enlarged parts (2nd row) by eight methods  with 20\% unobserved and 10\% corrupted entries. (a) the original images. (b)
			the observations. (c) t-SVD (Fourier). (d) TNN. (e) OITNN-O. (f) OITNN-L. (g) QMC. (h)  RQTC. (i) TNSS-QMC. (j) LRL-RQTC.}
		\label{fig:videohorsenan02nois01}
	\end{figure*}

	\begin{example}
	(Color Video Completion)
\end{example}
In this example, we deal with the color video completion problem, which involves filling in missing pixel values from a partly unobserved video. Assume the missing pixels are distributed randomly across  RGB three channels.

We compare the proposed  RQTC and LRL-RQTC  with representative models for color video completion: t-SVD (Fourier)\cite{tsvdfourier2019}, t-SVD (data)\cite{smztsvddata2020} (using a transform tensor
singular value decomposition based on a unitary transform matrix), OITNN-O\cite{wzj2022}, OITNN-L\cite{wzj2022}, LRQTC\cite{miao2020qtc} (quaternion tensor completion) , QMC\cite{jia2018quaternion} and TNSS-QMC\cite{jia2022quaternion}. 

The color
video dataset is also chosen from the ‘videoSegmentationData’ database
in \cite{fmkty09}.  We choose the color videos ‘BR130T’, ‘DO01$\_$030’, and 'DO01$\_$013' of size  $288 \times 352$, which are denoted as ‘bird’, ‘flower’, and ‘horse’,  respectively, and the videos can be expressed as 3-mode quaternion tensors. Their parameter values are empirically chosen to produce the greatest performance and are fixed in all testing for a fair comparison.

Different missing entries and noise levels are considered: $(1-
\rho, \gamma) = (50\%, 0\%),(80\%, 0\%),(85\%, 0\%)$.
In the proposed LRL-RQTC, we fix each window of size $36 \times 44$ and only act on the frontal slice, i.e.$ [\alpha_1,\alpha_2,\alpha_3] =
[0,0,1]$. The maximum number of iterations is $100$, each patch size is $8 \times 8$. For  RQTC, the weighted vector $\alpha = [\alpha_1,\alpha_2,\alpha_3] =
[0.8, 0.1,0.1]$.

For quantitative comparison, PSNR and SSIM values on the `bird', `flower', and `horse' videos are reported in Fig. \ref{f:birdcomplection05}, Fig \ref{f:flowercomplection08} and Table \ref{tab:horsecomplection08}. The corresponding visual examples are shown in
Fig. \ref{fig:videobirdnan05nois00} - \ref{fig:videobirdnan08nois00}, Fig. \ref{fig:videoflowernan08nois00} - \ref{fig:videoflowernan085nois00},  and Fig. \ref{fig:videohorsenan08nois00} - \ref{fig:videohorsenan09nois00} for qualitative evaluation. In the four histograms (Fig. \ref{f:birdcomplection05} and Fig \ref{f:flowercomplection08}), we can find that PSNR and SSIM in earthy yellow on the right-hand side are higher than the other. In Table \ref{tab:horsecomplection08}, we present the PSNR and SSIM values of random ten frames of the 'horse' video to provide additional information. The standout performance is bolded and the underlined data indicates sub-optimal. It is clear that the proposed LRL-RQTC   improves PSNR and SSIM values considerably. The PSNR values obtained by the LRL-RQTC  on three color video data improve by almost $3$ dB when compared to the second-best method and are always at the highest value, demonstrating the proposed method's superiority over the other good approaches presently in use. That implies our LRL-RQTC   is the best among the ten methods and suitable for different types of video. In particular, the PSNR values achieved by the LRL-RQTC   on the 'flower' color video data increase by roughly $4$ dB when compared to the second-best method. As displayed in the graph, the proposed LRL-RQTC   outperforms the rival method in terms of visual quality. It can be seen that our LRL-RQTC  captures the detail of each frame properly, which implies that learning technology to find similar patches is superior. 
From Fig. \ref{fig:videobirdnan05nois00}, the
videos restored by LRL-RQTC ($\ell$th column) are clearer and
show more details of the bird's wings as well as the foliage. Moreover, videos recovered by  RQTC ($k$th column) also restores details of the original video. Both LRL-RQTC and  RQTC   are superior to the other methods in respect of local features' recovery.
In Fig. \ref{fig:videoflowernan08nois00} - Fig. \ref{fig:videohorsenan09nois00}, the details of flower petals, grass, the tails of horse and so on recovered by LRL-RQTC   are closer to the original frame. We present some of them and use the red box on the images for details. 

For computing time, we display the result in Table \ref{cpu}. Because of TNSS-QMC and our LRL-RQTC are based on patch ideas, per-patch based operations can be paralleled, so we only show the average time of each window. From Table \ref{cpu}, we can observe that our LRL-RQTC  gets better results without taking too long. It slightly less than OITNN-L which is also gets better recovery effect.

Remember  that the LRL-RQTC model  designed by using a learning perspective framework  searches for similar structures adaptively and forms a   low-rank structure based on the patch idea. This model is made feasible by the 2DQPCA-based classification function that characterizes the low-rank information from subspace structure after projection. Numerical results indicate that the LRL-RQTC model    has clearly made a significant increase in its ability to find high-dimensional information in low-dimensional space and accurately identify the delicate correlations between the observed and unknown values.

\begin{figure}
	\begin{center}
			\includegraphics[width=0.46\textwidth,height=0.25\textwidth]{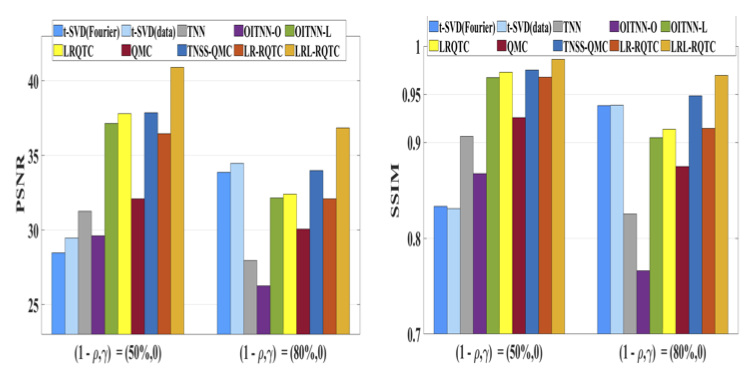}
	\end{center}
	\vskip-15pt
	\caption{ PSNR and SSIM comparisons on `bird' color video.}\label{f:birdcomplection05}
\end{figure}

\begin{figure}
	\centering
	\includegraphics[width=0.46\textwidth,height=0.3\textwidth]{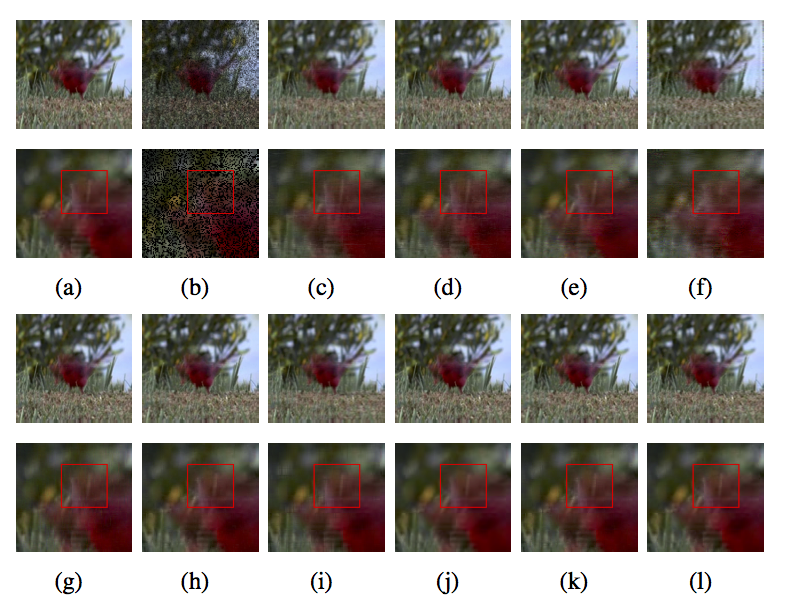}
	
	
	\caption{ One frame (1st ans 3rd row) and their enlarged parts (2nd and 4th row) by ten methods with 50\% unobserved entries. (a) the original. (b)
		the observations. (c) t-SVD (Fourier). (d) t-SVD (data). (e) TNN. (f) OITNN-O. (g) OITNN-L. (h) LRQTC. (i) QMC. (j) TNSS-QMC. (k)  RQTC. ($\ell$) LRL-RQTC.}
	\label{fig:videobirdnan05nois00}
\end{figure}
\begin{figure}
	\centering
	\includegraphics[width=0.46\textwidth,height=0.3\textwidth]{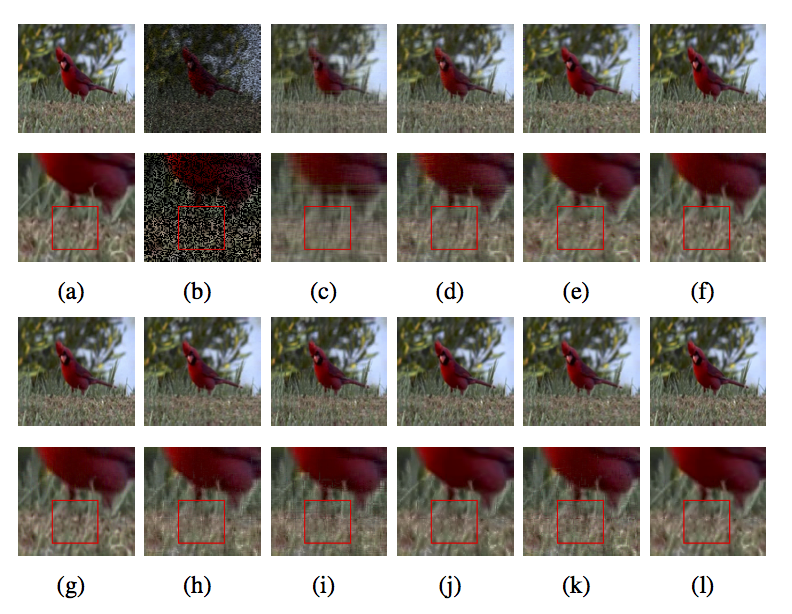}
	
	
    \caption{ One frame (1st ans 3rd row) and their enlarged parts (2nd and 4th row) by ten methods with 85\% unobserved entries. (a) the original. (b)
    	the observations. (c) t-SVD (Fourier). (d) t-SVD (data). (e) TNN. (f) OITNN-O. (g) OITNN-L. (h) LRQTC. (i) QMC. (j) TNSS-QMC. (k)  RQTC. ($\ell$) LRL-RQTC.}
	\label{fig:videobirdnan08nois00}
\end{figure}

\begin{figure}
	\begin{center}
			\includegraphics[width=0.46\textwidth,height=0.26\textwidth]{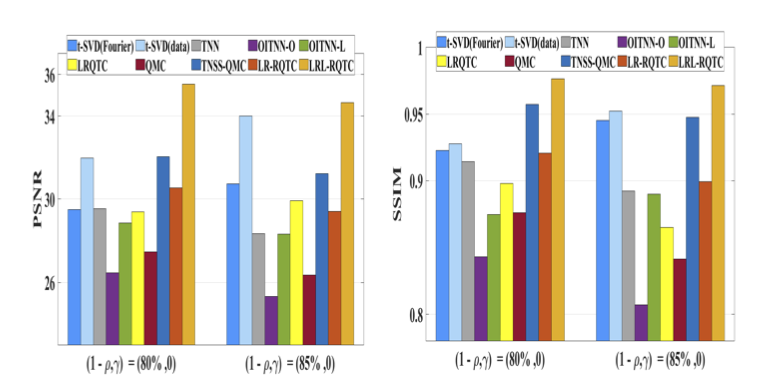}
	\end{center}
	\vskip-15pt
	\caption{ PSNR and SSIM comparisons on `flower' color video.}\label{f:flowercomplection08}
\end{figure}

\begin{figure}
	\centering
	\includegraphics[width=0.46\textwidth,height=0.3\textwidth]{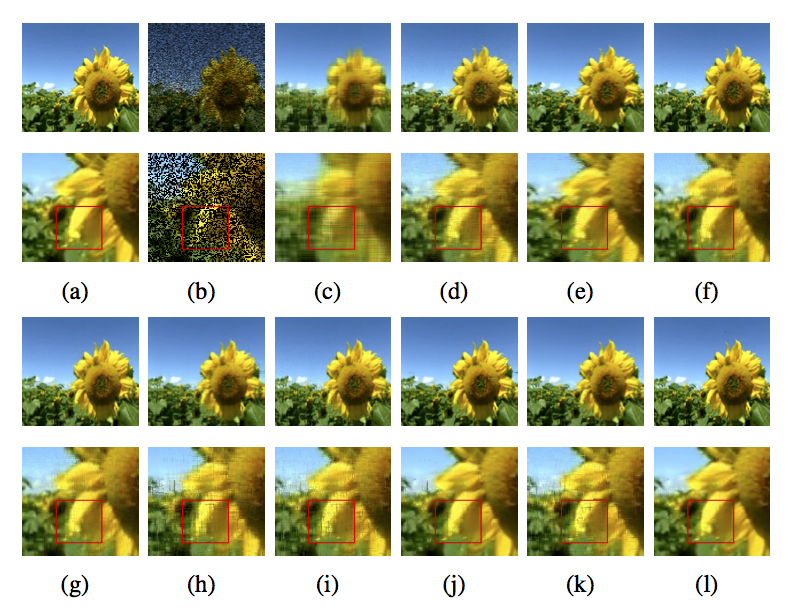}
	
	
	\caption{ One frame (1st ans 3rd row) and their enlarged parts (2nd and 4th row) by ten methods with 80\% unobserved entries. (a) the original. (b)
		the observations. (c) t-SVD (Fourier). (d) t-SVD (data). (e) TNN. (f) OITNN-O. (g) OITNN-L. (h) LRQTC. (i) QMC. (j) TNSS-QMC. (k)  RQTC. ($\ell$) LRL-RQTC.}
	\label{fig:videoflowernan08nois00}
\end{figure}
\begin{figure}
	\centering
		\includegraphics[width=0.46\textwidth,height=0.3\textwidth]{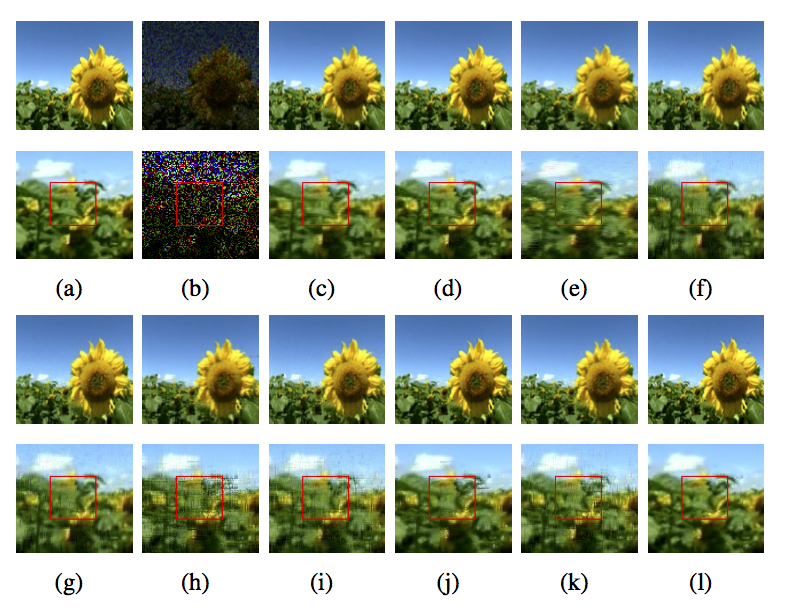}
	
	\caption{ One frame (1st ans 3rd row) and their enlarged parts (2nd and 4th row) by ten methods  with 85\% unobserved entries. (a) the original. (b)
		the observations. (c) t-SVD (Fourier). (d) t-SVD (data). (e) TNN. (f) OITNN-O. (g) OITNN-L. (h) LRQTC. (i) QMC. (j) TNSS-QMC. (k)  RQTC. ($\ell$) LRL-RQTC.}
	\label{fig:videoflowernan085nois00}
\end{figure}
\begin{table*}
	\caption{
		PSNR AND SSIM VALUES BY DIFFERENT METHODS ON  ``HORSE'' COLOR VIDEO.}
	\begin{center}
		(1)  $(1-
		\rho, \gamma) = (80\%, 0)$
	\end{center}
	\	\label{tab:horsecomplection08}
	\begin{center}
		\tabcolsep=0.1cm
		\renewcommand{\arraystretch}{2} \vskip-3mm {\fontsize{12pt}{\baselineskip}%
			\selectfont
			\scalebox{0.6}{
				\begin{tabular}{|c|cc|cc|cc|cc|cc|cc|cc|cc|cc|cc|}
					\hline
					Number of        & \multicolumn{2}{c|}{t-SVD (Fourier)}& \multicolumn{2}{c|}{t-SVD (data)}
					&\multicolumn{2}{c|}{TNN} 
					&\multicolumn{2}{c|}{OITNN-O}
					&\multicolumn{2}{c|}{OITNN-L}
					&\multicolumn{2}{c|}{LRQTC}
					&\multicolumn{2}{c|}{QMC}
					&\multicolumn{2}{c|}{TNSS-QMC }
					&\multicolumn{2}{c|}{ RQTC }
					&\multicolumn{2}{c|}{LRL-RQTC} 
					\\
					frames      &    PSNR&SSIM &PSNR&SSIM &PSNR&SSIM&   PSNR&SSIM &PSNR&SSIM &	PSNR&SSIM     &  PSNR&SSIM &PSNR&SSIM&    PSNR&SSIM &    PSNR&SSIM 
					\\ \hline\hline
					1
					&   {29.29} & {0.8817}
					&   {33.80} & {0.9435}
					&   {33.74} &  \underline{0.9521}
					&   {31.43} & {0.9016}
					&   {31.38} & {0.9007}
					&  {32.11}  & {0.8815}&
					29.77 & 0.8253   &   \underline{35.21}& {0.9408}	
					&  33.02 & 0.8941   &  {\bf 37.06}  & {\bf 0.9599}
					\\ \hline
					2
					&   {28.47} & {0.8571}
					&   {33.19} & {0.9339}
					&   {34.72} &  \underline{0.9622}
					&   {33.92} & {0.9369}
					&   {33.89} & {0.9364}
					&   {32.10}  & {0.8843} &
					29.71 & 0.8301   &   \underline{35.67}  &  {0.9453}
					&  33.18 & 0.8973   &  {\bf 37.46}  & {\bf 0.9629}
					\\ \hline
					3	
					&   {28.89} & {0.8707}
					&   {33.66} & {0.9405}
					&   {35.39} &  \underline{0.9661}
					&   {34.59} & {0.9440}
					&   {34.56} & {0.9436}
					&   {32.31}  & {0.8851} &
					30.09 & 0.8353   &   \underline{36.30} &  {0.9505}
					&  33.36 & 0.9000   &  {\bf 37.68}  & {\bf 0.9637}
					\\ \hline
					4	
					&   {29.54} & {0.8800}
					&   {33.95} & {0.9412}
					&   {34.99} &  \underline{0.9645}
					&   {34.79} & {0.9485}
					&   {34.75} & {0.9481}
					&  {32.43}  & {0.8886} &
					30.13 & 0.8380   &   \underline{36.35}  &  {0.9504}
					&  33.39 & 0.8998   &  {\bf 37.63}  & {\bf 0.9644}
					\\ \hline
					5	
					&   {29.58} & {0.8828}
					&   {33.99} & {0.9429}
					&   {34.18} &  \underline{0.9598}
					&   {34.74} & {0.9505}
					&   {34.71} & {0.9501}
					&   {32.29} &  {0.8873} &
					30.04 & 0.8385   &   \underline{35.99}  &  {0.9489}
					&  33.20 & 0.8978  &  {\bf 37.60}  & {\bf 0.9647}
					\\ \hline
					6
					&   {29.29} & {0.8786}
					&   {33.76} & {0.9409}
					&   {33.58} &  \underline{0.9539}
					&   {34.60} & {0.9484}
					&   {34.56} & {0.9480}
					&   {35.15} &  {0.8795} &
					30.11 & 0.8368   &   \underline{35.38}  &   {0.9413}
					&  32.56 & 0.8812   &  {\bf 37.27}  & {\bf 0.9621}
					\\ \hline
					7
					&   {28.30} & {0.8563}
					&   {33.03} & {0.9341}
					&   {33.12} &  \underline{0.9460}
					&   {34.16} & {0.9451}
					&   {34.12} & {0.9447}
					&   {32.08} &  {0.8784} &
					30.04 & 0.8337   &   \underline{35.16}  &  {0.9373}
					&  32.59 & 0.8800   &  {\bf 37.09}  & {\bf 0.9593}
					\\ \hline
					8
					&   {28.56} & {0.8673}
					&   {33.47} & {0.9400}
					&   {33.04} &  \underline{0.9445}
					&   {33.92} & {0.9433}
					&   {33.89} & {0.9429}
					&   {32.26} &  {0.8798} &
					30.04 & 0.8388   &  \underline{35.26}  &   {0.9375}
					&  32.69 & 0.8804   &  {\bf 37.17}  & {\bf 0.9595}
					\\ \hline
					9
					&   {29.13} & {0.8748}
					&   {33.54} & {0.9396}
					&   {33.02} &  \underline{0.9438}
					&   {33.60} & {0.9360}
					&   {33.56} & {0.9354}
					&   {32.28} &  {0.8800} &
					30.02 & 0.8341   &   \underline{35.24}  &  {0.9374}
					&  32.70 & 0.8816   &  {\bf 37.22}  & {\bf 0.9595}
					\\ \hline
					10
					&   {28.94} & {0.8752}
					&   {33.54} &  \underline{0.9426}
					&   {31.74} & {0.9289}
					&   {30.75} & {0.8940}
					&   {30.71} & {0.8932}
					&   {32.31} &  {0.8808} &
					30.08 & 0.8356   &   \underline{35.29}  &  {0.9389}
					&  32.67 & 0.8818   &  {\bf 37.33}  & {\bf 0.9601}
					\\ \hline
					average
					&   {29.04} & {0.8725}
					&   {33.59} & {0.9299}
					&   {33.75} & \underline{0.9522}
					&   {33.65} & {0.9348}
					&   {33.61} & {0.9343}
					&   {32.53} &  {0.8808} &
					30.00 & 0.8346   &   \underline{35.56}  &  {0.9428}
					&  32.94 & 0.8894   &  {\bf 37.35}  & {\bf 0.9616}
					\\ \hline
			\end{tabular}}
		} 
	\end{center}
	
	\begin{center}
		(2)  $(1-
		\rho, \gamma) = (85\%, 0\%)$
	\end{center}
	\label{tab:horsecomplection09}
	\begin{center}
		\tabcolsep=0.1cm
		\renewcommand{\arraystretch}{2} \vskip-3mm {\fontsize{12pt}{\baselineskip}%
			\selectfont
			\scalebox{0.6}{
				\begin{tabular}{|c|cc|cc|cc|cc|cc|cc|cc|cc|cc|cc|}
					\hline
					Number of        & \multicolumn{2}{c|}{t-SVD (Fourier)}& \multicolumn{2}{c|}{t-SVD (data)}
					&\multicolumn{2}{c|}{TNN} 
					&\multicolumn{2}{c|}{OITNN-O}
					&\multicolumn{2}{c|}{OITNN-L}
					&\multicolumn{2}{c|}{LRQTC}
					&\multicolumn{2}{c|}{QMC}
					&\multicolumn{2}{c|}{TNSS-QMC }
					&\multicolumn{2}{c|}{ RQTC }
					&\multicolumn{2}{c|}{LRL-RQTC} 
					\\
					frames      &    PSNR&SSIM &PSNR&SSIM &PSNR&SSIM&   PSNR&SSIM &PSNR&SSIM &	PSNR&SSIM     &  PSNR&SSIM &PSNR&SSIM&    PSNR&SSIM &    PSNR&SSIM 
					\\ \hline\hline
					1
					&   {25.99} & {0.7562}
					&   {30.42} & {0.8748}
					&   {32.65} &\underline{0.9373}
					&   {30.28} & {0.8729}
					&   {30.23} & {0.8718}
					&  {31.66}  & {0.8642}&
					30.17 & 0.8737   &  \underline{33.94}&  {0.9243}	
					&  31.72 &  0.8560  &  {\bf 37.16}  & {\bf 0.9623}
					\\ \hline
					2
					&   {25.41} & {0.7251}
					&   {29.93} & {0.8576}
					&   {33.46} & \underline{0.9486}
					&   {32.51} & {0.9127}
					&   {32.46} & {0.9118}
					&   {31.63}  & {0.8638} &
					29.96 & 0.8317   &   \underline{33.65}  &   {0.9189}
					&  31.74 & 0.8544   &  {\bf 36.93}  & {\bf 0.9595}
					\\ \hline
					3
					&   {25.62} & {0.7418}
					&   {30.19} & {0.8669}
					&   {33.98} & \underline{0.9548}
					&   {33.07} & {0.9236}
					&   {33.04} & {0.9230}
					&   {31.72}  & {0.8673} &
					30.05 & 0.8334   &   \underline{33.50} &   {0.9197}
					&  31.93 & 0.8568   &  {\bf 37.02}  & {\bf 0.9619}
					\\ \hline
					4
					&   {25.95} & {0.7524}
					&   {30.45} & {0.8717}
					&   {33.68} & \underline{0.9536}
					&   {33.27} & {0.9287}
					&   {33.23} & {0.9280}
					&  {31.67}  & {0.8671} &
					30.00 & 0.8333   &   \underline{33.44}  &  {0.9187}
					&  31.83 & 0.8556   &  {\bf 37.03}  & {\bf 0.9621}
					\\ \hline
					5
					&   {25.99} & {0.7572}
					&   {30.54} & {0.8769}
					&   {33.18} & \underline{0.9490}
					&   {33.28} & {0.9309}
					&   {33.24} & {0.9302}
					&   {31.56} &  {0.8671} &
					30.02 & 0.8372   &   \underline{33.29}  &   {0.9203}
					&  31.76 & 0.8557  &  {\bf 36.84}  & {\bf 0.9626}
					\\ \hline
					6
					&   {26.03} & {0.7568}
					&   {30.55} & {0.8751}
					&   {32.59} & {0.8414}
					&   {33.24} & \underline{0.9305}
					&   {33.20} & {0.9300}
					&   {31.65} &  {0.8658} &
					30.16 & 0.8375   &   \underline{33.31}  &  {0.9175}
					&  31.82 & 0.8585   &  {\bf 37.21}  & {\bf 0.9618}
					\\ \hline
					7
					&   {25.31} & {0.7285}
					&   {29.75} & {0.8584}
					&   {32.18} & \underline{0.9335}
					&   {32.87} & {0.9257}
					&   {32.84} & {0.9251}
					&   {31.64} &  {0.8626} &
					31.69 & 0.8555   &   \underline{33.45}  &   {0.9164}
					&  31.69 & 0.8555   &  {\bf 36.95}  & {\bf 0.9583}
					\\ \hline
					8
					&   {25.60} & {0.7402}
					&   {30.28} & {0.8679}
					&   {31.97} & \underline{0.9280}
					&   {32.68} & {0.9222}
					&   {32.64} & {0.9216}
					&   {31.97} &  {0.8697} &
					30.67 & 0.8406   &  \underline{34.05}  &   {0.9233}
					&  32.00 & 0.8616   &  {\bf 37.27}  & {\bf 0.9610}
					\\ \hline
					9
					&   {25.95} & {0.7552}
					&   {30.52} & {0.8741}
					&   {31.85} & \underline{0.9261}
					&   {32.35} & {0.9139}
					&   {32.30} & {0.9129}
					&   {31.97} &  {0.8686} &
					30.57 & 0.8411   &   \underline{34.14}  &  {0.9240}
					&  32.01 &  0.8608   &  {\bf 37.51}  & {\bf 0.9610}
					\\ \hline
					10
					&   {25.96} & {0.7552}
					&   {30.55} & {0.8750}
					&   {30.71} & {0.9094}
					&   {29.68} & {0.8684}
					&   {29.63} & {0.8673}
					&   {31.86} &  {0.8662} &
					30.14 & 0.8360   &   \underline{34.19}  &  \underline{0.9242}
					&  31.99 & 0.8601   &  {\bf 37.35}  & {\bf 0.9606}
					\\ \hline
					average
					&   {25.78} & {0.7468}
					&   {30.32} & {0.8698}
					&   {32.63} & \underline{0.9383}
					&   {32.32} & {0.9130}
					&   {32.28} & {0.9122}
					&   {31.73} &  {0.8662} &
					30.34 & 0.8420   &   \underline{33.70}  &  {0.9208}
					&  31.85 & 0.8575   &  {\bf 37.13}  & {\bf 0.9611}
					\\ \hline
			\end{tabular}}
		}
	\end{center}

\end{table*}

\begin{figure}
	\centering
	\includegraphics[width=0.46\textwidth,height=0.3\textwidth]{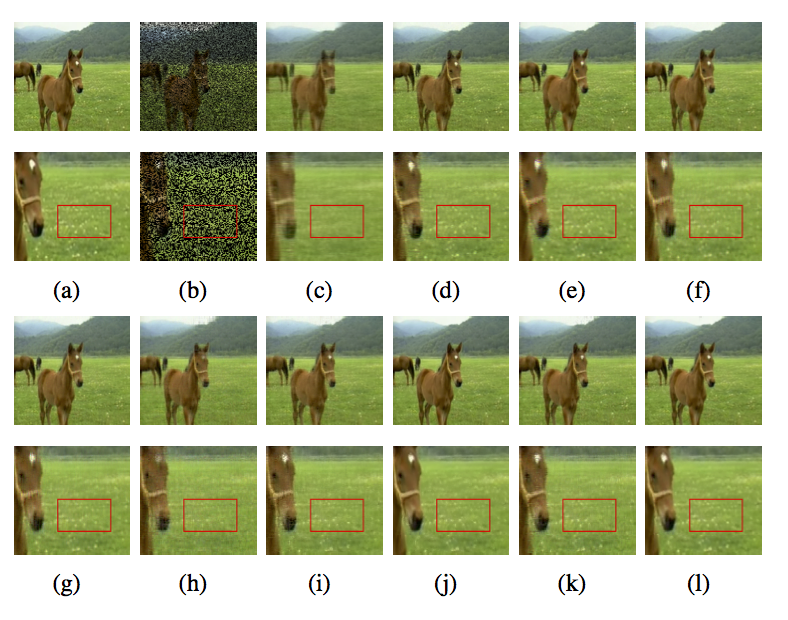}


\caption{ One frame (1st ans 3rd row) and their enlarged parts (2nd and 4th row) by ten methods with 80\% unobserved entries. (a) the original. (b)
	the observations.(c) t-SVD (Fourier). (d) t-SVD (data). (e) TNN. (f) OITNN-O. (g) OITNN-L. (h) LRQTC. (i) QMC. (j) TNSS-QMC. (k)  RQTC. ($\ell$) LRL-RQTC.}
	\label{fig:videohorsenan08nois00}
\end{figure}
\begin{figure}
	\centering
	\includegraphics[width=0.46\textwidth,height=0.3\textwidth]{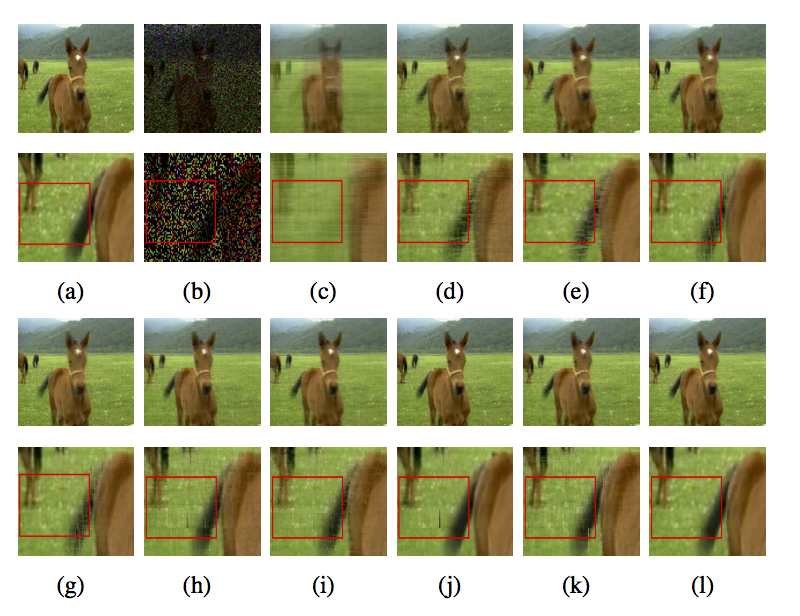}


\caption{ One frame (1st ans 3rd row) and their enlarged parts (2nd and 4th row) by ten methods  with 85\% unobserved entries. ((a) the original. (b)
	the observations. (c) t-SVD (Fourier). (d) t-SVD (data). (e) TNN. (f) OITNN-O. (g) OITNN-L. (h) LRQTC. (i) QMC. (j) TNSS-QMC. (k) RQTC. ($\ell$) LRL-RQTC.}
\label{fig:videohorsenan09nois00}
\end{figure}
\begin{table}
	
	\tabcolsep 0pt \caption{AVERAGE CPU TIME OF EACH FRAME OF THE RESTORED VIDEOS (UNIT:SECOND)}\label{cpu} \vspace*{-22pt}
	\begin{center}
		\tabcolsep=0.1cm
		\renewcommand{\arraystretch}{1.2}
		\def\temptablewidth{0.5\textwidth}
		{\rule{\temptablewidth}{0.5pt}}
		\begin{tabular*}{\temptablewidth}{@{\extracolsep{\fill}}c|cc|cc|cc}
			\multirow{2}*{Methods} & \multicolumn{2}{c|}{bird video }&\multicolumn{2}{c|}{flower video} & \multicolumn{2}{c}{horse video} \\
			&{(50\%,0)}&{(80\%,0)}&{(80\%,0)}&{(85\%,0)}&{(80\%,0)}&{(85\%,0)}\\ 
			\hline
			t-SVD(Fourier)& 8.22   &9.64   &6.61   &6.87   &6.67   &6.83\\
			t-SVD(data)   & 15.97  &18.58  &12.92  &12.59  &10.97  &11.01\\
			TNN           & \bf 5.05   &\bf 4.17   &\bf 4.54   &\bf 4.03   &\bf 2.86   &\bf 2.85\\
			OITNN-O       & 36.25  &35.57  &36.41  &18.30  &24.08  &12.50\\
			OITNN-L       & 37.02  &36.96  &36.98  &36.74  &24.02  &24.24\\
			QMC           & 96.12  &84.06  &88.79  &96.27  &80.11  &87.88\\
			LRQTC         & 19.61  &19.61  &19.51  &19.49  &19.52  &19.47\\
			TNSS-QMC      & 15.31  &16.05  &14.75  &14.78  &13.29  &12.18\\
			RQTC       & 22.74  &21.98  &20.65  &21.23  &19.67  &19.63\\
			LRL-RQTC      & 26.21  &24.62  &25.25  &26.92  &23.84  &22.98\\
			\hline
			
		\end{tabular*}
		{\rule{\temptablewidth}{0.5pt}}
	\end{center}			
\end{table}

	\section{Conclusion}\label{sec:con}
	In this article, we develop a new learning technology-based robust quaternion tensor completion model, LRL-RQTC.  Firstly, we divide the observed large quaternion tensor into smaller quaternion sub-tensors and parallelly act on these quaternion sub-tensors. Secondly, we use 2DQPCA-based classification function to learning low-rank information  of each slices and form them  into a low-rank quaternion matrix.  Then RQTC problem of each quaternion sub-tensors can be solved. The recommended technique focuses on keeping as many low-rank correlations among the local characteristics of color videos as feasible in order to preserve more related information under the framework.
	 A new  RQTC model is also proposed to solve RQTC problem by ADMM-based framework. We 
	 establish  conditions for quaternion tensor incoherence and exact recovery theory.  Numerical experiments on the established RQTC and LRL-RQTC models  demonstrate that our proposed models can inpaint a given missing and/or corrupted quaternion tensor better, maintaining a  low-rank structure for processing nature color videos  both effectively and efficiently. In order to make the most of the information between images and get a faster
	 and more efficient algorithm, we will strive to develop a  learning technology combined with other advanced technology for quaternion tensor decomposition in the future.

	\section*{Acknowledgment}

	This work is  supported  in part by the National Natural Science Foundation of China under grants 12171210, 12090011,   and 11771188;  
the Major Projects of Universities in Jiangsu Province (No. 21KJA110001);   
 and the Natural Science Foundation of Fujian Province of China grants 2020J05034.


	\ifCLASSOPTIONcaptionsoff
	\newpage
	\fi

	
	
	%

	%

\end{document}